\documentclass[a4paper,fleqn]{cas-dc}

\usepackage{amsmath,amssymb,amsfonts}
\usepackage{amsthm}
\usepackage{graphicx}
\usepackage[numbers]{natbib}
\usepackage{booktabs}
\usepackage{multirow}
\usepackage{bbm}
\usepackage{bm}
\usepackage{mathtools}
\usepackage{algorithm}
\usepackage{algpseudocode}
\usepackage{array}
\usepackage{hyperref}
\usepackage{xcolor}
\usepackage{balance}
\newtheorem{definition}{Definition}
\newtheorem{theorem}{Theorem}

\newtheorem{corollary}{Corollary}

\newcommand{\cU}{\mathcal{U}}
\newcommand{\cH}{\mathcal{H}}

\newcommand{\cO}{\mathcal{O}}
\newcommand{\cY}{\mathbf{Y}}
\newcommand{\cG}{\mathbf{G}}
\newcommand{\R}{\mathbb{R}}
\newcommand{\C}{\mathbb{C}}

\newcommand{\norm}[1]{\left\lVert#1\right\rVert}

\newcommand{\simU}{\sim_{\cU,\mathcal{C}}}
\newcommand{\Qscene}{\mathcal{Q}_{\scene}^{\cU,\mathcal{C}}}
\newcommand{\thetaspace}{\Theta_{\mathrm{phys}}}

\newcommand{\obs}{\mathrm{obs}}
\newcommand{\phys}{\mathrm{phys}}
\newcommand{\res}{\mathrm{res}}
\newcommand{\eq}{\mathrm{eq}}
\newcommand{\dist}{\mathrm{dist}}
\newcommand{\amb}{\mathrm{amb}}
\newcommand{\act}{\mathrm{act}}
\newcommand{\nui}{\mathrm{nui}}
\newcommand{\scene}{\mathrm{scene}}
\newcommand{\rx}{\mathrm{rx}}
\newcommand{\calib}{\mathrm{cal}}
\newcommand{\offset}{\mathrm{offset}}
\newcommand{\transient}{\mathrm{transient}}

\newcommand{\inv}{\mathrm{inv}}
\newcommand{\can}{\mathrm{can}}
\newcommand{\raw}{\mathrm{raw}}
\newcommand{\calop}{\mathcal{C}}
\newcommand{\calN}{\mathcal{N}}

\def\tsc#1{\csdef{#1}{\textsc{\lowercase{#1}}\xspace}}
\tsc{WGM}
\tsc{QE}
\tsc{EP}
\tsc{PMS}
\tsc{BEC}
\tsc{DE}

\begin{document}
\let\WriteBookmarks\relax
\def\floatpagepagefraction{1}
\def\textpagefraction{.001}
\shorttitle{Sensor-Conditioned Representation Learning via Scene-Relevant Observation Quotients}
\shortauthors{Y. Jiao et al.}

\title [mode = title]{Sensor-Conditioned Representation Learning via Scene-Relevant Observation Quotients}         
\author[1]{Yan Jiao}[orcid=0009-0000-9028-4614]
\ead{yanyanjiao2018@gmail.com}
\author[1,2]{Pin-Han Ho}[orcid=0000-0002-0717-1481]
\cormark[1]
\ead{pinhanho71@gmail.com}
\author[1,3]{Limei Peng}[orcid=0000-0001-9984-9861]
\ead{auroraplm@knu.ac.kr}
\affiliation[1]{organization={Shenzhen Institute for Advanced Study, 
University of Electronic Science and Technology of China},
                city={Shenzhen},
                country={China}}
\affiliation[2]{organization={Department of Electrical and Computer Engineering, 
University of Waterloo},
                city={Waterloo},
                country={Canada}}
\affiliation[3]{organization={School of Computer Science and Engineering, 
Kyungpook National University},
                city={Daegu},
                country={South Korea}}
\cortext[1]{Corresponding author}

\begin{abstract}
Learned representations in intelligent sensing systems are commonly evaluated by reconstruction fidelity or downstream prediction accuracy. However, these criteria do not determine whether a latent representation preserves the distinctions that the sensing process can meaningfully support. In sensor-conditioned environments, nuisance factors may induce measurement variability without scene changes, while physically different scenes may remain indistinguishable under limited sensing capability. This raises a fundamental question: \emph{what distinctions should a sensor-conditioned representation preserve?}

This paper formulates \emph{sensor-conditioned representation correctness} as preserving sensing-supported scene distinctions while suppressing nuisance-induced and sensor-unsupported variations. We introduce the \emph{scene-relevant observation quotient}, a representation target induced by sensing-supported distinguishability after nuisance canonicalization. Based on this formulation, we develop Observation-Quotient Tucker-Structured Autoencoding (OQ-TSAE), a scene--nuisance factorized framework for quotient-consistent representation learning, together with diagnostics for false distinction, false merge, nuisance sensitivity, and latent ordering consistency.

Experiments on a controlled sensor-conditioned benchmark show that quotient-consistent supervision improves representation-correctness diagnostics over reconstruction-oriented, metric-learning, and contrastive-learning baselines. Sensitivity, perturbation, and ablation studies further demonstrate the importance of quotient-aligned supervision, reliable quotient relations, and quotient geometry. Complementary real-radar experiments show that a reconstruction-only OQ-TSAE variant retains competitive downstream utility, favorable robustness under observation degradation, and low seed-to-seed variability. These results suggest that sensor-conditioned representations should be evaluated not only by predictive utility, but also by whether their latent geometry preserves sensing-justified scene distinctions.

\end{abstract}

\begin{highlights}
\item Studies representation correctness under structured sensing.
\item Defines scene-relevant observation quotients after canonicalization.
\item Connects quotient consistency to representation correctness.
\item Proposes OQ-TSAE for quotient-consistent representation learning.
\item Shows controlled quotient correctness and complementary real-radar utility.
\end{highlights}

\begin{keywords}
    Representation Learning \sep
    Observation Quotient \sep
    Representation Correctness \sep
    Structured Sensing \sep
    Scene-Relevant Distinguishability \sep
    Sensor-Conditioned Representation Learning
\end{keywords}
\maketitle

\section{Introduction}
\label{sec:introduction}

\subsection{Motivation: Representation Correctness Under Sensing Constraints}
\label{subsec:motivation_representation_correctness_under_sensing_constraints}

Learned representations in intelligent sensing systems are commonly evaluated by reconstruction fidelity, downstream prediction accuracy, detection performance, or localization error. These criteria are useful, but they do not directly determine whether the latent representation preserves the distinctions that the sensing process can meaningfully support. In sensor-conditioned environments, the same physical scene may produce different raw measurements because of receiver gain drift, calibration phase offsets, hardware state, interference, or environmental uncertainty. Conversely, physically different scenes may remain indistinguishable when the sensing system lacks sufficient resolution, aperture diversity, action diversity, or signal-to-noise reliability. Thus, raw measurement variability is not automatically meaningful scene variability, and physical scene variability is not automatically observable variability.

This issue is especially prominent in structured sensing systems. Multi-input multi-output (MIMO) radar, frequency-diverse apertures, and reconfigurable sensing platforms generate observations through a coupled interaction among the scene, sensing configuration, hardware response, calibration state, and environmental uncertainty~\cite{li2008mimo,liu2022integrated,tang2020wireless,karl2023foundations}. In such systems, the observation process itself determines which distinctions can be reliably supported. A change in the raw measurement may reflect a nuisance perturbation rather than a scene change, while a real physical change may be below the distinguishability limit of the available sensing-action family. Sensor-conditioned representation learning is therefore constrained not only by the latent scene, but also by the observation process through which the scene is accessed.

Most sensing-learning pipelines do not explicitly model this constraint at the level of representation geometry. A sensing observation is encoded into a latent representation and then used for occupancy estimation, localization, detection, reconstruction, or other downstream tasks~\cite{major2019vehicle,feng2020deep}. The learned representation is usually considered useful if it improves the downstream metric. However, downstream performance alone does not answer a more fundamental question: whether the learned latent space separates and merges observations for the right reasons. A representation may support a particular downstream predictor while still encoding hardware drift as scene structure, or while collapsing scene differences that remain resolvable under the sensing process~\cite{locatello2019challenging,scholkopf2021toward}. Such a representation may be task-useful in one setting but geometrically incorrect as a scene representation.

The central challenge addressed in this paper is therefore \emph{representation correctness}. We use this term to denote whether a learned scene representation preserves distinctions supported by sensing while collapsing distinctions that are nuisance-induced or observationally unsupported. This notion is stricter than downstream predictive accuracy and different from generic compactness, invariance, or reconstruction fidelity. It concerns the validity of the representation itself: which observations should be separated, which should be merged, and what sensing-grounded criterion justifies this organization.

Figure~\ref{fig:intro_overview} illustrates the motivating failure modes together with the representation criterion developed in this paper.
\begin{figure*}
\centering
\includegraphics[width=\textwidth]{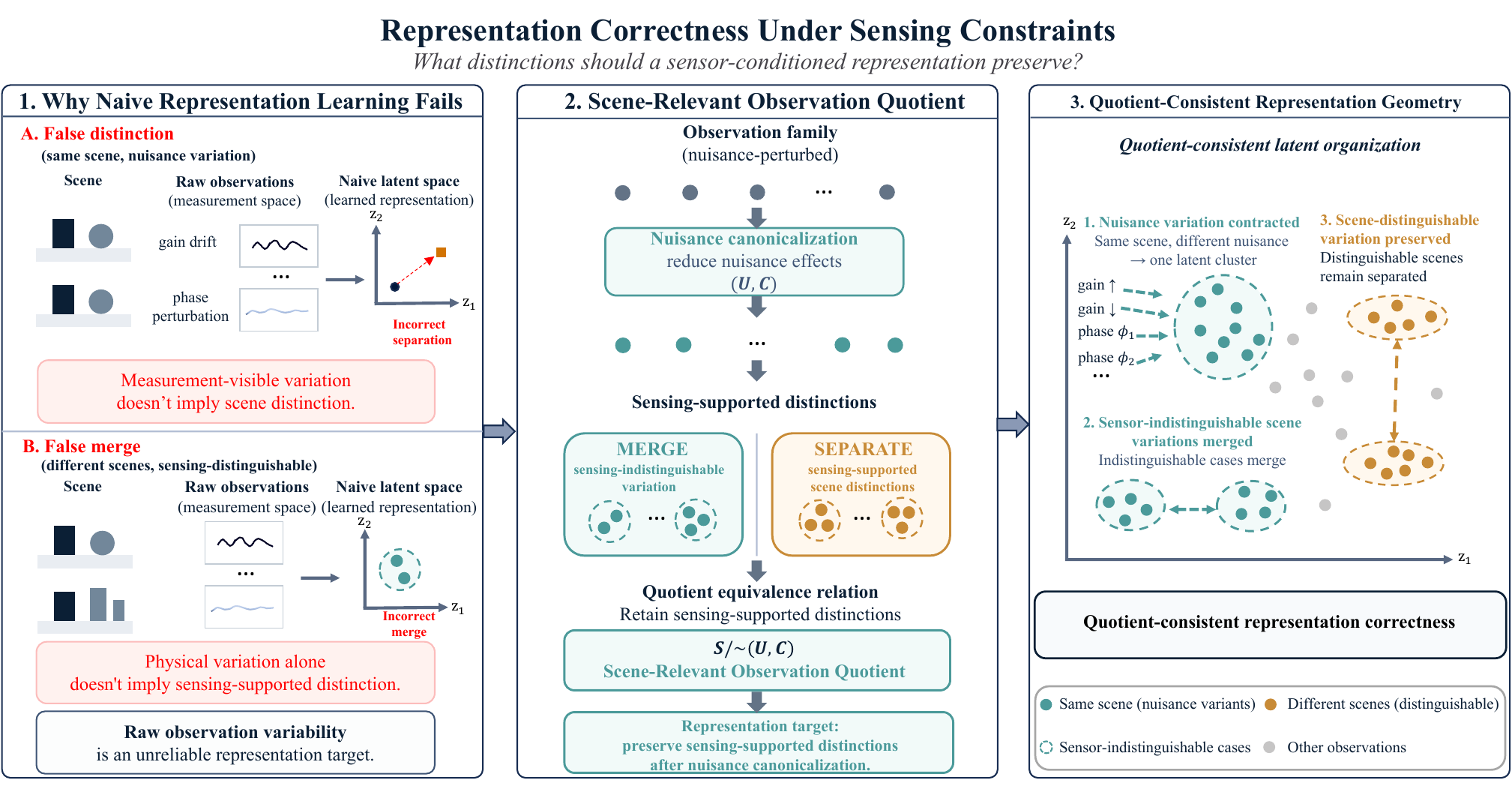}
\caption{
Representation correctness under sensing constraints. The scene-relevant observation quotient defines meaningful representation structure through sensing-supported distinguishability after nuisance canonicalization, avoiding both false distinctions and false merges.
}
\label{fig:intro_overview}
\end{figure*}

The first failure mode is \emph{false distinction}. Observations generated by the same underlying scene may be separated in the latent space because nuisance factors such as receiver gain drift, calibration phase offset, baseline shift, or transient interference alter the raw measurements. A representation driven by raw measurement distinguishability may therefore encode nuisance as scene structure. The second failure mode is \emph{false merge}. Scene configurations that remain distinguishable under the sensing process may be mapped to nearly identical latent representations, thereby discarding scene information that the sensor can in fact support.

These two failures are not merely opposite ends of a simple invariance tradeoff. Preserving all measurement-visible variation is incorrect because nuisance variation can dominate the raw observation. Collapsing all nuisance-associated variation is also insufficient because overly broad invariance can suppress scene differences that remain physically resolvable under the available sensing process. Thus, the goal is neither to preserve everything nor to remove everything except a task label. The goal is to preserve exactly those scene distinctions that are justified by the sensing process.

This leads to the organizing question of this paper:
\emph{what distinctions should a sensor-conditioned representation preserve?}
We answer this question by introducing the \emph{scene-relevant observation quotient}. The quotient defines scene equivalence through sensing-supported distinguishability after nuisance canonicalization. It serves as a representation target that specifies which latent distinctions are justified, which nuisance-induced differences should collapse, and which physical differences are not meaningful because they are unsupported by the available sensing process.

The proposed perspective changes the role of representation learning in intelligent sensing systems. Instead of treating latent representations only as intermediate variables optimized for a decoder or downstream predictor, we treat them as objects whose geometry should be evaluated against a sensing-grounded correctness criterion. A sensor-conditioned representation should preserve scene distinctions that remain supported by sensing after nuisance effects have been canonicalized, while collapsing both nuisance-induced variation and sensor-indistinguishable scene variation. This motivates a representation target defined by sensing-supported distinguishability rather than by raw measurements, generic invariance, or downstream task objectives alone.

\subsection{The Scene-Relevant Observation Quotient}
\label{subsec:the_scene_relevant_observation_quotient}

We formalize this representation target through the \emph{scene-relevant observation quotient}. The starting point is the distinguishability structure induced by the sensing process. Observational equivalence~\cite{hermann2003nonlinear,lennart1999system} provides a natural formalization of this idea: two scene configurations should not be arbitrarily separated if the sensing process cannot reliably distinguish them. Conversely, configurations that remain distinguishable under sensing should not be collapsed. This perspective shifts representation learning from preserving raw observation geometry to preserving the sensing-supported distinguishability structure induced by the observation process.

However, raw observational equivalence alone is not sufficient for scene representation learning because raw measurements jointly depend on scene variables and nuisance variables. Two observations corresponding to the same underlying scene may become distinguishable because of receiver gain drift, calibration phase offset, baseline variation, or transient interference. If equivalence is defined directly in raw measurement space, such nuisance effects may incorrectly become part of the representation target. The resulting latent space may be faithful to measurement variability but incorrect as a scene representation.

We therefore define equivalence after nuisance canonicalization rather than directly in raw measurement space. The canonicalization step suppresses known or estimable nuisance degrees of freedom while preserving scene-induced measurement structure. Scene configurations are treated as equivalent when their canonicalized sensing responses remain indistinguishable under the available sensing family. Conversely, scene configurations are separated when their differences remain supported by sensing after canonicalization. As a result, nuisance-induced variation of the same scene collapses,
physically different scenes that remain indistinguishable to the sensing system collapse,
and physically different scenes that remain sensor-distinguishable are preserved as separate quotient classes.

This quotient formulation clarifies the distinction between the proposed approach and several adjacent representation learning paradigms. Unlike invariant representation learning, the objective is not to remove all variation associated with a nuisance label, domain shift, or augmentation family. The objective is to remove nuisance-induced variation only insofar as it does not correspond to sensing-supported scene distinction. Unlike disentangled representation learning, the goal is not factor independence as an end in itself, but quotient-consistent scene organization. Unlike contrastive learning, the positive and negative relationships are not defined only by sample augmentation or label construction, but by sensing-supported distinguishability after nuisance canonicalization. Unlike reconstruction-oriented representation learning, preserving measurement-visible variation is not sufficient, because some measurement-visible variation is scene-irrelevant.

The key point is that nuisance canonicalization is not merely a preprocessing step for improved reconstruction. It defines the distinguishability relation that determines which latent distinctions are justified. The scene-relevant observation quotient therefore acts as a representation-correctness target: a learned scene latent should contract nuisance-equivalent and sensor-indistinguishable cases, while preserving scene distinctions that the sensing process can meaningfully support.

This perspective naturally motivates representation-level diagnostics beyond downstream prediction accuracy. A representation should be evaluated by whether it avoids false distinctions, avoids false merges, suppresses nuisance sensitivity, and organizes latent geometry according to scene-relevant distinguishability. These diagnostics are central in the experimental evaluation of this paper.

\subsection{Observation-Quotient Tucker-Structured Autoencoding}
\label{subsec:observation_quotient_tucker_structured_autoencoding}

Building on this formulation, we propose \emph{Observation-Quotient Tucker-Structured Autoencoding} (OQ-TSAE), a Tucker-structured scene--nuisance factorized framework for sensor-conditioned observations. OQ-TSAE is designed to operationalize the scene-relevant observation quotient rather than merely optimize reconstruction fidelity. The framework preserves the multi-way organization of sensing observations through a shared Tucker-structured encoder, while separating scene-relevant and nuisance-dependent variation into dedicated latent pathways.

The scene pathway is constrained by quotient-oriented supervision. Its normalized latent representation is shaped according to sensing-supported distinguishability after nuisance canonicalization. In contrast, the nuisance pathway retains additional measurement-dependent variation needed for reconstructing raw observations, including hardware-dependent and calibration-related effects. This factorization addresses a basic tension in sensor-conditioned representation learning. A single reconstruction-oriented latent code is naturally incentivized to preserve nuisance variation because nuisance affects raw measurements. Conversely, an overly invariant latent code may suppress scene differences that remain distinguishable under the sensing process. OQ-TSAE separates these roles: the scene latent is organized by quotient correctness, while the nuisance latent provides reconstruction compatibility.

The Tucker structure provides an inductive bias aligned with the multilinear organization of sensing observations across temporal, sensing, response, and range-related dimensions. Many structured sensing systems naturally generate observations with explicit mode structure, and flattening these observations can discard useful organization. Nevertheless, tensor structure alone is not the central contribution of this paper. A standard tensor autoencoder can preserve multi-way measurement structure while still encoding nuisance variation as scene structure. Similarly, a reconstruction-oriented tensor model can achieve low reconstruction error without producing a quotient-correct scene representation.

OQ-TSAE should therefore be understood as an implementation of a representation-correctness principle, not as a generic tensor-compression architecture. The Tucker-structured encoder supplies a physically compatible representation mechanism for multi-way observations, but the scene-relevant observation quotient defines the learning target. The primary objective is to align the scene latent with
sensing-supported distinguishability,
contract nuisance variation and sensor-indistinguishable cases,
and preserve sensor-supported scene distinctions. In this sense, OQ-TSAE is one concrete realization of a broader quotient-consistent representation-learning framework for sensor-conditioned intelligent systems.

\subsection{Contributions and Paper Organization}
\label{subsec:contributions_and_paper_organization}

This paper makes four primary contributions.

\textbf{First,} we formulate \emph{sensor-conditioned representation correctness} as the problem of preserving sensing-supported scene distinctions while suppressing nuisance-induced and sensor-unsupported variations. This formulation highlights a limitation of evaluating learned sensing representations solely through downstream prediction or reconstruction accuracy.

\textbf{Second,} we introduce the \emph{scene-relevant observation quotient} as a principled representation target. The proposed quotient defines representation correctness through sensing-supported distinguishability after nuisance canonicalization, rather than through raw measurement variability, physical scene identity, generic invariance, or downstream task performance alone.

\textbf{Third,} we develop a theoretical framework for quotient-consistent representation learning and propose OQ-TSAE as a concrete learning architecture. The analysis establishes when computable scene distinguishability is consistent with observational equivalence, characterizes stability under imperfect nuisance canonicalization, and links quotient-oriented supervision to representation-correctness diagnostics. OQ-TSAE operationalizes the quotient target through a Tucker-structured scene--nuisance factorized autoencoding framework that remains compatible with observation reconstruction, scene decoding, and competitive downstream utility under real-radar evaluation.

\textbf{Fourth,} we introduce quotient-oriented diagnostics for evaluating representation correctness beyond downstream prediction accuracy. The proposed diagnostics assess false distinctions, false merges, nuisance sensitivity, and latent ordering consistency under controlled scene-pair families. The controlled benchmark is used not as a sensor-specific state-of-the-art benchmark, but as a mechanism-level testbed in which quotient structure, nuisance variation, and scene distinguishability relationships can be explicitly evaluated, while complementary real-radar experiments provide additional assessments of the learned scene--nuisance tensor representations in terms of downstream utility, robustness, and stability.

The remainder of the paper is organized as follows. Section~\ref{sec:related_work} positions the work relative to structured sensing, invariant representation learning, observability, and tensor representation models. Section~\ref{sec:sensor_conditioned_scene_state_formulation} formulates the sensor-conditioned observation model, nuisance canonicalization, and the scene-relevant observation quotient. Section~\ref{sec:theoretical_analysis} analyzes quotient consistency, canonicalization stability, and diagnostic consistency. Section~\ref{sec:scene_nuisance_factorized_oq_tsae} introduces OQ-TSAE. Section~\ref{sec:learning_objectives_and_evaluation_metrics} presents the learning objectives and quotient-oriented evaluation diagnostics. Section~\ref{sec:controlled_sensor_conditioned_benchmark} describes the controlled benchmark and scene-pair taxonomy.
Section~\ref{sec:experimental_evaluation} reports experimental evaluation. Section~\ref{sec:discussion} discusses interpretation, implications, and limitations. Section~\ref{sec:conclusion} concludes the paper.

\section{Related Work}
\label{sec:related_work}

The proposed work is adjacent to several mature research directions: learned representations for structured sensing, nuisance-invariant and disentangled representation learning, observability-based distinguishability analysis, and tensor-structured representation models. These areas provide important technical ingredients, but they do not by themselves define the representation target studied in this paper. The central distinction of the present work is that we study sensor-conditioned representation correctness rather than reconstruction fidelity, downstream prediction, nuisance invariance, or tensor compression alone. Specifically, we ask which scene distinctions should remain separated after nuisance canonicalization and which should collapse because they are not reliably supported by the sensing process. This section positions the proposed scene-relevant observation quotient against the closest existing paradigms and clarifies why they do not directly address representation correctness under sensing constraints.

\subsection{Structured Sensing and Learned Sensor Representations}
\label{subsec:sensor_conditioned_representation_learning}

Structured sensing platforms such as frequency-modulated continuous-wave (FMCW) radar, millimeter-wave imaging systems, MIMO radar, and frequency-diverse apertures have enabled increasingly capable perception pipelines for detection, localization, occupancy inference, and scene understanding~\cite{patole2017automotive,schumann2021radarscenes,harlow2024new,borts2024radar}. In these systems, observations are not generic feature vectors. They are complex-valued measurements generated by a physical sensing process, shaped by controllable sensing actions, aperture responses, range resolution, calibration state, propagation effects, and noise. This structure motivates representation models that exploit sensing geometry and multi-way measurement organization rather than treating observations as generic feature vectors.

Most learned sensing pipelines, however, evaluate latent representations through task-level or reconstruction-level criteria. A representation is usually considered effective if it improves occupancy estimation, localization, detection, scene reconstruction, or raw measurement prediction. These criteria are useful, but they do not answer whether the latent geometry preserves the distinctions that the sensing platform can actually support. A model can achieve strong task accuracy while encoding receiver gain drift as scene structure, or while collapsing scene differences that remain resolvable under the available sensing actions. Conversely, a reconstruction-oriented representation may faithfully preserve measurement variability that is irrelevant to the scene.

This limitation motivates treating sensor-conditioned representation learning as a problem of representation correctness rather than only predictive performance. The relevant question is not simply whether a latent code is useful for a downstream decoder, but whether it separates scene configurations that remain distinguishable after nuisance canonicalization and contracts those that are indistinguishable or scene-irrelevant under the sensing process. Existing learned sensing representations are typically not formulated in terms of an explicit quotient-level representation target.

\subsection{Invariant, Nuisance-Aware, and Disentangled Representations}
\label{subsec:nuisance_aware_and_disentangled_representation_learning}

A second related line of work seeks representations that separate task-relevant information from nuisance variation. General representation learning studies how latent variables can support abstraction, transfer, and downstream prediction~\cite{bengio2013representation}. Disentangled representation learning, including $\beta$-VAE~\cite{higgins2017betavae} and InfoGAN~\cite{chen2016infogan}, encourages latent factors to capture statistically or semantically distinct sources of variation. Invariant representation learning and domain generalization methods, including domain-adversarial learning~\cite{ganin2016domain}, invariant risk minimization~\cite{arjovsky2019invariant}, information bottleneck formulations~\cite{tishby2000information}, and related nuisance-aware objectives~\cite{moyer2018invariant,zhu2022weakly,taghanaki2021robust}, aim to suppress domain-specific or nuisance-dependent variability while retaining predictive information. Recent contrastive representation learning methods
~\cite{chen2020simple,khosla2020supervised}
construct latent geometry through positive and negative sample relationships
and have become widely used for learning discriminative representations.

These approaches address an important problem, but they define representation relevance through statistical independence, domain invariance, augmentation consistency, nuisance suppression, or contrastive similarity. In sensor-conditioned settings, however, the central question is not merely which variations should be invariant, but which scene distinctions remain justified by the sensing process after nuisance canonicalization. Excessive invariance may erase scene differences that remain distinguishable under sensing, insufficient invariance may preserve nuisance-induced measurement variation as scene structure, and contrastive objectives may organize latent geometry according to the chosen pairing strategy without explicitly encoding sensing-supported distinguishability. The proposed framework therefore differs in the representation target itself: relevance is defined through sensing-supported distinguishability after nuisance canonicalization rather than through statistical independence, invariance constraints, or contrastive similarity alone.

\subsection{Observability, Equivalence, and Quotient-Based Distinguishability}
\label{subsec:observability_equivalence_and_quotient_structure}

Classical observability theory studies whether system states can be distinguished from admissible observations~\cite{kalman1963mathematical,hermann2003nonlinear}. Related notions of observational equivalence formalize the fact that different physical states may produce indistinguishable observation histories under a given sensing process~\cite{doostmohammadian2016measurement}. The present work draws inspiration from this perspective because physical difference alone does not necessarily justify representational difference under sensing constraints. In sensing systems, distinguishability is induced jointly by the scene, the sensor, the admissible actions, and the noise model.

However, classical observability does not directly solve the representation-learning problem considered here. Its primary concern is state recoverability, sensor selection, or identifiability under an observation model. It does not specify how a learned latent representation should organize scenes when raw measurements are contaminated by nuisance factors that are visible to the sensor but irrelevant to the scene representation. If equivalence is defined directly in raw measurement space, gain drift, calibration phase offset, baseline shift, or transient interference may incorrectly become part of the representation target.

The scene-relevant observation quotient draws on the observational-equivalence viewpoint but differs from it in two important respects. First, equivalence is defined after nuisance canonicalization, so measurement-visible nuisance does not by itself justify latent separation. Second, the quotient is used as a representation target: it determines which latent distinctions should be preserved and which should collapse. Thus the proposed formulation is not simply an application of observability theory. It converts sensing-supported distinguishability into a criterion for representation correctness.

\subsection{Tensor-Structured Representation Models}
\label{subsec:tensor_structured_representation_models}

Tensor decompositions and tensor networks provide natural tools for structured sensing data because measurements often have explicit multi-way organization across time, aperture state, response mode, range bin, frequency, or spatial channel~\cite{kolda2009tensor,sidiropoulos2017tensor,cichocki2016tensor}. Tucker and related decompositions can reduce parameter count, preserve mode-specific structure, and provide an inductive bias aligned with multilinear sensing measurements. Tensor-structured autoencoding frameworks can therefore provide an effective mechanism for compressing high-dimensional sensing observations while preserving mode-specific structure.

Nevertheless, tensor structure alone does not define which latent distinctions are meaningful. A standard tensor autoencoder is typically driven by reconstruction fidelity, and reconstruction fidelity rewards preservation of all measurement-visible variation, including nuisance variation. A tensor model can therefore be structurally appropriate for the data while still geometrically incorrect as a scene representation. Conversely, adding a scene decoder to a tensor autoencoder can improve task compatibility without ensuring that the latent space contracts sensor-indistinguishable scenes or suppresses nuisance-induced false distinctions.

In the proposed framework, Tucker structure serves as an inductive bias for preserving the multilinear organization of sensing observations rather than as the primary representation objective. While the shared Tucker encoder preserves multilinear sensing structure, the central objective is to align the scene latent with a scene-relevant observation quotient. Tensor structure therefore provides an inductive bias for structured sensing observations, whereas quotient-oriented supervision defines which latent distinctions should be preserved and which should collapse.

\subsection{Positioning of the Present Work}
\label{subsec:related_work_positioning}

Existing sensing representations, invariant and disentangled objectives, observability-based distinguishability analysis, and tensor-structured models provide important ingredients for sensor-conditioned learning. However, none explicitly defines a representation target through sensing-supported distinguishability after nuisance canonicalization. The scene-relevant observation quotient addresses this gap by defining representation correctness at the level of distinguishability structure, while OQ-TSAE provides a practical realization of this principle through quotient-oriented supervision and scene--nuisance factorization.

\section{Sensor-Conditioned Scene-State Formulation}
\label{sec:sensor_conditioned_scene_state_formulation}

This section formalizes the scene-relevant observation quotient underlying the proposed representation target. The central premise is that scene equivalence should be determined not by raw measurement variability alone, nor by physical scene identity alone, but by sensing-supported distinguishability after nuisance canonicalization. In particular, physically different scene configurations may still be considered equivalent whenever their differences are not reliably distinguishable through the sensing process after nuisance removal. Conversely, scene differences that remain resolvable under sensing should be preserved. Figure~\ref{fig:quotient_intuition} provides an intuitive overview of this principle before introducing the formal sensing model, nuisance canonicalization operator, and quotient construction.

\begin{figure*}
    \centering
    \includegraphics[width=\textwidth]{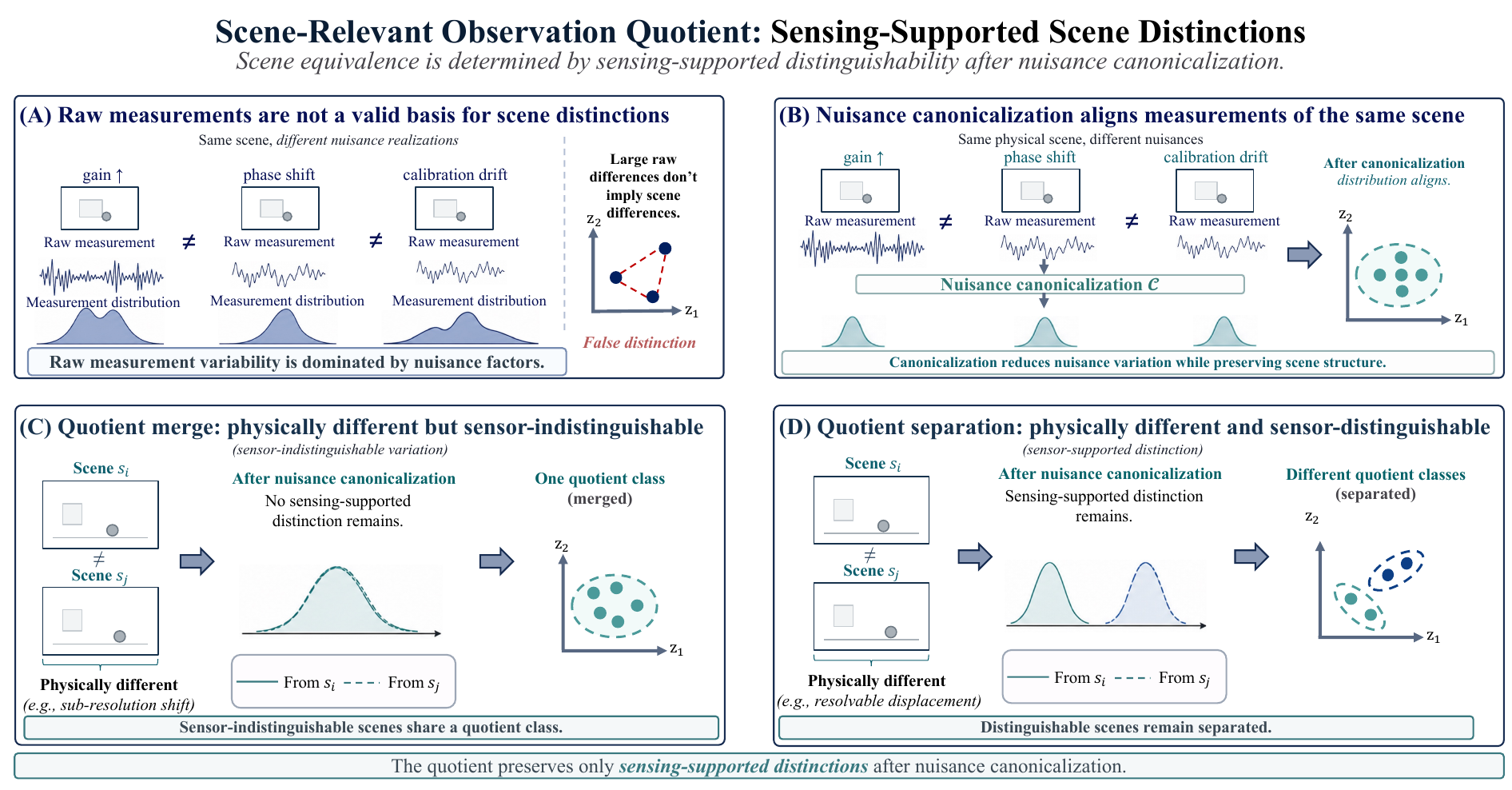}
    \caption{
    Conceptual illustration of the scene-relevant observation quotient. After nuisance canonicalization, scene configurations are grouped according to sensing-supported distinguishability: sensor-indistinguishable configurations belong to the same quotient class, whereas sensor-resolvable differences remain distinct.
    }
    \label{fig:quotient_intuition}
\end{figure*}

Figure~\ref{fig:quotient_intuition} serves as an intuitive bridge between the motivating failure modes of representation correctness and the formal construction introduced below. The following subsections formalize this intuition through a scene--nuisance decomposition, an action-indexed sensing model, nuisance canonicalization, and the resulting scene-relevant observation quotient.

\subsection{Physical Configuration and the Scene--Nuisance Decomposition}
\label{subsec:physical_configuration_and_the_scene_nuisance_deco}

We begin by defining the space of physical configurations encountered by the sensing platform. At each time step $t$, the physical scene state is described by a collection of target parameters:

\begin{equation}
\theta_t^{\phys}
=
\left\{
p_{\ell,t},
v_{\ell,t},
\alpha_{\ell,t},
m_{\ell,t}
\right\}_{\ell=1}^{L},
\label{eq:physical_state}
\end{equation}
where $p_{\ell,t} \in \R^{d_p}$ denotes the position of the $\ell$-th scatterer or target, $v_{\ell,t} \in \R^{d_v}$ denotes its motion state, $\alpha_{\ell,t} \in \C$ is its complex-valued reflectivity coefficient capturing both amplitude and phase of the scattered field, and $m_{\ell,t}$ is an optional discrete material or object class label. The number of targets $L$ may vary across scenes. The space of all such configurations is denoted by $\thetaspace$.
In the controlled benchmark of Section~\ref{sec:controlled_sensor_conditioned_benchmark}, we instantiate this general parameterization through a controlled sensor-conditioned benchmark with benchmark-specific scene and motion parameterizations to facilitate mechanism-level analysis under known sensing conditions.

A central modeling assumption of this work is the decomposition of the physical configuration into two components that serve distinct roles in representation learning:

\begin{equation}
\theta_t^{\phys}
=
(s_t,\eta_t),
\label{eq:decomposition}
\end{equation}
where $s_t$ denotes the scene variables, i.e., the aspects of the physical scene that the representation is intended to preserve and on which downstream tasks depend, and $\eta_t$ denotes nuisance variables, namely measurement-visible factors that affect raw sensor observations without reflecting changes in the scene itself.

The scene variables $s_t$ typically include geometric and electromagnetic target properties that downstream tasks depend upon, such as target positions, motion characteristics, reflectivity values, and material attributes when task-relevant. The precise scene-state parameterization may vary across sensing settings depending on sensing capability and observable resolution. The nuisance variables $\eta_t$ capture hardware-related and environmental effects that modulate measurements independently of scene changes. In structured electromagnetic sensing, representative nuisance factors include receiver gain drift $g_t^{\rx}$, calibration phase offset $\phi_t^{\calib}$, baseline offset $b_t^{\offset}$, and transient interference $c_t^{\transient}$.

This decomposition formalizes the distinction between scene structure and measurement-visible nuisance. For example, two sensing episodes with identical scene geometry and reflectivity but different receiver gain may produce substantially different raw observations despite corresponding to the same underlying scene. A representation driven solely by raw measurement distinguishability may therefore encode hardware state rather than scene state. By distinguishing scene variables $s_t$ from nuisance variables $\eta_t$, we obtain the variables on which the scene-relevant observation quotient is defined.

\subsection{Controllable Sensing Actions and Measurement Structure}
\label{subsec:controllable_sensing_actions_and_measurement_struc}

A defining characteristic of structured sensing platforms is that the sensing agent exercises control over specific degrees of freedom in the measurement process~\cite{li2008mimo,van2004optimum}. These controllable degrees of freedom constitute the sensing actions. We model each sensing action as a pair:

\begin{equation}
a^{\act}
=
(m,r)
\in
\cU_{\act}
=
\mathcal{M}
\times
\mathcal{R},
\label{eq:sensing_action}
\end{equation}
where $m \in \mathcal{M}$ indexes the aperture state, which may correspond to a transmit--receive element pair in a MIMO array, a beam index in a phased-array system, or a frequency-indexed spatial mode in a frequency-diverse aperture. The second coordinate $r \in \mathcal{R}$ indexes the response mode, such as a receive channel, polarization state, or orthogonal coding dimension. The set $\cU_{\act}$ of admissible sensing actions constitutes the action family of the platform.

When the sensing agent executes action $a^{\act}$, the sensor returns a vector of measurements over range or beat-frequency bins:

\begin{equation}
\mathbf{y}(a^{\act};\theta_t^{\phys})
=
\big[
y(m,r,q;\theta_t^{\phys})
\big]_{q\in\mathcal{Q}}
\in
\C^{Q},
\label{eq:measurement_vector}
\end{equation}
where $q \in \mathcal{Q}$ indexes the range bin determined by propagation delay or beat frequency. Importantly, the action coordinates $(m,r)$ are selected by the sensing agent, whereas the measurement coordinate $q$ is produced by the physical measurement process. Thus, the sensing agent selects an aperture state and response mode, while the sensor returns a range-resolved measurement vector.

Over a temporal window of length $K+1$, the resulting measurements are organized into a four-dimensional observation tensor:

\begin{equation}
\cY_{t-K:t}
=
\Big[
y(m,r,q,\tau)
\Big]_
{\substack{
\tau=t-K,\ldots,t\\
m\in\mathcal{M},
r\in\mathcal{R},
q\in\mathcal{Q}
}}
\in
\C^{(K+1)\times M\times R\times Q}.
\label{eq:obs_tensor}
\end{equation}

Each tensor mode carries explicit physical meaning. The temporal mode of dimension $T=K+1$ captures scene evolution and encodes Doppler information through phase progression across frames. The aperture-index mode of dimension $M$ provides spatial diversity and governs angular resolution. The response mode of dimension $R$ captures channel or polarization diversity, while the range-bin mode of dimension $Q$ provides depth resolution and target separation along the line of sight. This organization is not an arbitrary data format but reflects the physical sensing degrees of freedom of the platform and motivates the Tucker-structured representation introduced in Section~\ref{sec:scene_nuisance_factorized_oq_tsae}.

\subsection{Forward Observation Model}
\label{subsec:forward_observation_model}

For a given physical configuration $\theta_t^{\phys}$ and sensing action $a^{\act}$, the noise-free observation follows a standard multi-scatterer radar/array forward model~\cite{li2008mimo,van2004optimum,skolnik2008radar}:

\begin{equation}
\begin{aligned}
\cH(m,r,q;\theta_t^{\phys})
&=
\sum_{\ell=1}^{L}
\alpha_{\ell,t}
\,G_{m,r}(p_{\ell,t},q)
\\
&\quad \times
\exp\!\left(
-j
\frac{4\pi f_m}{c}
R_{m,r}(p_{\ell,t},q)
\right),
\end{aligned}
\label{eq:forward_model}
\end{equation}
where $G_{m,r}(p,q)$ denotes the platform-dependent spatial response function incorporating beam patterns, aperture weights, and propagation effects, $R_{m,r}(p,q)$ is the corresponding round-trip propagation path length, $f_m$ is the operating frequency associated with aperture state $m$, and $c$ is the speed of light. The complete noise-free response vector for action $a^{\act}$ is

\begin{equation}
\mathbf{h}(a^{\act};\theta_t^{\phys})
=
\big[
\cH(m,r,q;\theta_t^{\phys})
\big]_{q\in\mathcal{Q}}.
\label{eq:noisefree_response}
\end{equation}

Observed measurements are corrupted by additive noise. We adopt a general measurement model in which the noise vector associated with action $a^{\act}$ follows a circularly symmetric complex Gaussian distribution~\cite{steven1993fundamentals} with covariance matrix $\Sigma_a$:

\begin{equation}
\mathbf{y}(a^{\act})
=
\mathbf{h}(a^{\act};\theta_t^{\phys})
+
\mathbf{n}_a,
\qquad
\mathbf{n}_a
\sim
\mathcal{CN}(0,\Sigma_a),
\label{eq:noise_model}
\end{equation}
where $\Sigma_a$ may capture both thermal noise and action-dependent uncertainty, such as frequency-dependent phase noise. In the primary formulation, $\Sigma_a$ is treated as available from calibration; sensitivity to imperfect calibration is examined experimentally.

\subsection{Nuisance Canonicalization}
\label{subsec:nuisance_canonicalization}

Raw measurement distributions generally depend jointly on scene variables and nuisance variables. To isolate the scene-relevant component of the observation process, we introduce a nuisance canonicalization operator:

\begin{equation}
\calop:
\cY
\mapsto
\widetilde{\cY},
\label{eq:canonicalization}
\end{equation}
which maps raw observations to canonicalized observations through removal or normalization of known nuisance degrees of freedom. In structured electromagnetic sensing, representative canonicalization operations include global gain normalization to reduce receiver gain drift, common phase alignment to mitigate calibration phase offsets, baseline subtraction using calibration measurements, and background subtraction to suppress static clutter returns.

The canonicalized observation $\widetilde{\cY}$ is intended to preserve scene-induced measurement structure while suppressing nuisance-driven variation. As a reference condition, we define ideal canonicalization as satisfying nuisance invariance at the distribution level: for any scene configuration $s$ and any two nuisance realizations $\eta,\eta'$,

\begin{equation}
p(
\widetilde{\cY}
\mid
s,\eta,a
)
=
p(
\widetilde{\cY}
\mid
s,\eta',a
),
\qquad
\forall
a
\in
\cU_{\act}.
\label{eq:canonicalization_condition}
\end{equation}

This condition characterizes an ideal nuisance canonicalization operator. In practice, canonicalization is generally imperfect and may leave residual nuisance-dependent variation. The theoretical analysis in Section~\ref{subsec:canonicalization_stability} therefore studies the stability of quotient assignment under bounded canonicalization error rather than assuming exact invariance.

\subsection{Scene-Relevant Observation Quotient}
\label{subsec:scene_relevant_observation_quotient}

With the nuisance canonicalization operator in place, we define the scene-relevant observation equivalence relation that underlies the representation objective of this work.

\begin{definition}[Scene-Relevant Observation Equivalence]
Two scene configurations $s_i, s_j \in \mathcal{S}$ are observationally equivalent under the sensing action family $\cU_{\act}$ and canonicalization operator $\calop$, denoted by $s_i \simU s_j$, if and only if they induce identical canonicalized measurement distributions under all admissible sensing actions:

\begin{equation}
p(
\widetilde{\cY}
\mid
s_i,a
)
=
p(
\widetilde{\cY}
\mid
s_j,a
),
\qquad
\forall
a
\in
\cU_{\act}.
\label{eq:scene_equiv}
\end{equation}

\end{definition}

The relation $\simU$ defines an equivalence relation on the scene space $\mathcal{S}$. Reflexivity and symmetry follow directly from the definition, while transitivity follows from the transitivity of equality of probability distributions. The equivalence class associated with a scene configuration $s$ is

\begin{equation}
[s]
=
\{
s'
\in
\mathcal{S}
:
s'
\simU
s
\},
\label{eq:equivalence_class}
\end{equation}
representing the set of scene configurations that remain observationally indistinguishable from $s$ under the sensing action family after nuisance canonicalization.

\begin{definition}[Scene-Relevant Observation Quotient]
The scene-relevant observation quotient space is the set of equivalence classes:

\begin{equation}
\Qscene
=
\mathcal{S}
/\simU.
\label{eq:quotient_space}
\end{equation}

\end{definition}

An element of $\Qscene$ represents an equivalence class of scene configurations rather than a single physical realization. The quotient space formalizes the observable resolution of the sensing platform by identifying which distinctions among physical scene configurations are supported by the sensing process after nuisance canonicalization and which are observationally inaccessible.

\subsection{Scene-Relevant Distinguishability Metric}
\label{subsec:scene_relevant_distinguishability_metric}

For the observation quotient to guide representation learning, we require a computable measure of scene distinguishability. Let $\widetilde{\mathbf h}(a;s)$ denote the canonicalized noise-free response vector for scene $s$ under sensing action $a$, obtained by evaluating the forward model and applying the nuisance canonicalization operator. We define the scene-relevant distinguishability metric as

\begin{equation}
\begin{aligned}
d_{\Sigma,\mathrm{scene}}^2(s_i,s_j)
&=
\sum_{a\in\mathcal{U}_{\mathrm{act}}}
\Delta\widetilde{\mathbf h}_{a,ij}^{H}
\Sigma_a^{-1}
\Delta\widetilde{\mathbf h}_{a,ij},
\\
\Delta\widetilde{\mathbf h}_{a,ij}
&=
\widetilde{\mathbf h}(a;s_i)
-
\widetilde{\mathbf h}(a;s_j).
\end{aligned}
\label{eq:dsigma_scene}
\end{equation}

This distinguishability functional incorporates the noise characteristics of the sensing platform through inverse covariance weighting, so that response differences under noisier sensing conditions contribute less to distinguishability. Strictly speaking, $d_{\Sigma,\mathrm{scene}}$ behaves as a pseudometric on the original scene space $\mathcal S$, since distinct scene configurations may induce identical canonicalized sensing responses and therefore satisfy $d_{\Sigma,\mathrm{scene}}(s_i,s_j)=0$. A proper metric is obtained only after quotienting by scene equivalence. This behavior is intentional rather than pathological: zero distinguishability identifies scene configurations that should collapse under the scene-relevant observation quotient. Under Gaussian measurement noise, Theorem~\ref{thm:consistency} establishes that zero scene-relevant distinguishability is equivalent to scene-relevant observational equivalence.

For representation learning, the continuous distinguishability metric is converted into reliable pairwise supervision sets through thresholding. Since thresholded distance relations on finite data are generally not transitive and therefore do not form exact equivalence classes, we refer to these sets as reliable supervision pairs rather than quotient classes. We define three categories of scene pairs:

\begin{align}
\mathcal{P}_{\eq}^{\scene}
&=
\{
(i,j)
:
d_{\Sigma,\mathrm{scene}}(s_i,s_j)
\le
\epsilon_{\mathrm{eq}}
\},
\label{eq:peq_scene}
\\
\mathcal{P}_{\dist}^{\scene}
&=
\{
(i,j)
:
d_{\Sigma,\mathrm{scene}}(s_i,s_j)
\ge
\epsilon_{\mathrm{dist}}
\},
\label{eq:pdist_scene}
\\
\mathcal{P}_{\amb}^{\scene}
&=
\{
(i,j)
:
\epsilon_{\mathrm{eq}}
<
d_{\Sigma,\mathrm{scene}}(s_i,s_j)
<
\epsilon_{\mathrm{dist}}
\},
\label{eq:pamb_scene}
\end{align}
where $\mathcal{P}_{\eq}^{\scene}$ contains observationally similar scene pairs intended for contraction, $\mathcal{P}_{\dist}^{\scene}$ contains reliably distinguishable scene pairs intended for separation, and $\mathcal{P}_{\amb}^{\scene}$ contains ambiguous pairs excluded from hard supervision. The values of $\epsilon_{\eq}$ and $\epsilon_{\dist}$ used in the primary experiments are obtained from the calibration protocol used for reliable pair construction and are reported in Appendix~\ref{app:hyperparameters}.

\subsection{Observable Scene Targets}
\label{subsec:observable_scene_targets}

The final component of the problem formulation specifies the observable scene quantities that the learned representation is intended to recover. These targets are defined at a sensor-compatible resolution to ensure consistency with the scene-relevant observation quotient.

The primary observable scene target is defined as
\begin{equation}
s_t^{\obs}
=
\big(
\widetilde{\boldsymbol{\rho}}_t^{\res},
\;
\boldsymbol{\upsilon}_t^{\res}
\big),
\label{eq:obs_target}
\end{equation}
where
\begin{equation}
\widetilde{\boldsymbol{\rho}}_t^{\res}
=
\calN_{\calib}
(
\boldsymbol{\rho}_t^{\res}
)
\in
\C^{N_x^{\res}
\times
N_y^{\res}
\times
N_z^{\res}}
\end{equation}
denotes the calibrated relative complex reflectivity field. The normalization operator $\calN_{\calib}$ removes global amplitude scaling and global phase rotation while preserving relative cell-wise amplitude and phase structure. This choice reflects the fact that absolute reflectivity amplitude and absolute phase are generally confounded with nuisance factors such as receiver gain drift and calibration phase offset in the absence of an external reference. By targeting relative rather than absolute reflectivity, the observable target remains compatible with nuisance invariance.

The second component,
\begin{equation}
\boldsymbol{\upsilon}_t^{\res}
\in
\R^{
2
\times
N_x^{\res}
\times
N_y^{\res}
\times
N_z^{\res}
},
\end{equation}
represents a two-component planar velocity field defined on the observable spatial grid and captures observable Doppler-related motion dynamics. In the present benchmark, motion supervision is restricted to planar velocity components $(v_x,v_y)$ to match the controlled sensing setting and sensor-compatible observable motion structure. Motion information serves as an auxiliary observable scene descriptor rather than a separate scene-pair perturbation family. This choice reflects the benchmark instantiation rather than a restriction of the proposed formulation.

Standard downstream outputs may be obtained through deterministic post-processing of the observable fields. For example, an occupancy field may be derived through reflectivity thresholding,
\begin{equation}
\cO_t^{\res}
=
\mathbf{1}
\!\left[
|
\widetilde{\boldsymbol{\rho}}_t^{\res}
|
>
\tau_\rho
\right],
\end{equation}
while target localization may be performed through peak detection on the reflectivity magnitude field.

Observable targets must be quotient-compatible: they should not assign different supervision targets to scene configurations that are indistinguishable under the canonicalized sensing family. Since observable scene targets are defined at a finite sensor-compatible resolution, quotient compatibility is imposed after observable rendering rather than at the level of exact physical scene identity.

Formally, let
\[
\Pi_{\res}:
\mathcal S
\rightarrow
\mathcal S^{\obs}
\]
denote a sensor-compatible rendering operator that maps physical scene configurations to observable scene targets at finite spatial resolution. We require the induced target map
\[
s
\mapsto
s^{\obs}
=
\Pi_{\res}(s)
\]
to be constant on equivalence classes of $\Qscene$. Under this requirement, if
\[
s_i \simU s_j,
\]
then
\[
\Pi_{\res}(s_i)
=
\Pi_{\res}(s_j).
\]

Thus the observable target defines a well-defined function on the quotient space $\Qscene$. This condition should be interpreted as a target-design requirement ensuring compatibility between supervision and sensing-supported distinguishability rather than as a recoverability guarantee for arbitrary physical scene labels.

\section{Theoretical Analysis}
\label{sec:theoretical_analysis}

This section develops the theoretical foundations underlying the proposed quotient-consistent representation framework. We first establish the consistency of the scene-relevant distinguishability metric with the scene-relevant observation quotient, showing that quotient equivalence can be characterized through a computable metric on canonicalized observations. We then analyze the stability of quotient assignment under imperfect nuisance canonicalization, showing that bounded canonicalization error induces only bounded perturbations in scene-relevant distinguishability and reliable supervision assignment. Next, we study the relationship between the proposed learning objectives and the empirical diagnostic criteria used to assess quotient consistency. Finally, we present an information-geometric interpretation of the scene-relevant distinguishability metric through its connection to statistical distances on canonicalized measurement distributions.

\subsection{Quotient Consistency}
\label{subsec:quotient_consistency}

\begin{theorem}[Characterization of Quotient Equivalence]
\label{thm:consistency}
Assume that the measurement noise follows the Gaussian model of Eq.~\eqref{eq:noise_model} with positive definite covariance matrices $\Sigma_a \succ 0$ for all $a \in \cU_{\act}$, and that the canonicalization operator $\calop$ satisfies the ideal canonicalization condition of Eq.~\eqref{eq:canonicalization_condition}. Then the scene-relevant distinguishability metric characterizes the scene-relevant observation quotient in the sense that, for any two scene configurations $s_i,s_j\in\mathcal S$,
\begin{equation}
d_{\Sigma,\mathrm{scene}}(s_i,s_j)=0
\iff
s_i \simU s_j .
\label{eq:complete_invariant}
\end{equation}
\end{theorem}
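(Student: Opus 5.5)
The plan is to reduce both sides of \eqref{eq:complete_invariant} to the same statement: the canonicalized noise-free responses coincide for every action, i.e. $\widetilde{\mathbf h}(a;s_i)=\widetilde{\mathbf h}(a;s_j)$ for all $a\in\cU_{\act}$. The first step is to pin down the canonicalized measurement distribution $p(\widetilde{\cY}\mid s,a)$. By the ideal canonicalization condition \eqref{eq:canonicalization_condition}, the conditional law of $\widetilde{\cY}$ given $(s,\eta,a)$ does not depend on $\eta$. Hence marginalizing over any nuisance distribution gives a well-defined $p(\widetilde{\cY}\mid s,a)$.

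The second step is to identify this law with a Gaussian of the form $\mathcal{CN}(\widetilde{\mathbf h}(a;s),\Sigma_a)$. I will state explicitly that this uses the modeling convention implicit in the definition of $\widetilde{\mathbf h}$: canonicalization acts on the signal component and leaves the additive noise model \eqref{eq:noise_model} with covariance $\Sigma_a$ intact. For example, one may take a fixed reference nuisance $\eta_0$ at which $\calop$ acts as the identity, or more generally treat $\calop$ as signal-affine. By the $\eta$-independence above, the law can then be evaluated at any convenient nuisance realization. This modeling step is the main obstacle. For a general nonlinear $\calop$ (e.g. gain normalization by the measured norm), the canonicalized noise is not exactly $\mathcal{CN}(0,\Sigma_a)$. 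I would therefore make the assumption explicit, namely that $\calop$ maps the raw observation to $\widetilde{\mathbf h}(a;s)+\mathbf n_a$ in distribution, and remark that only this structural property is used.

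With the Gaussian form in hand, the two directions are routine.

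($\Leftarrow$) If $s_i\simU s_j$, then for each $a$ the two Gaussian laws are equal, so their means are equal. Each $\Delta\widetilde{\mathbf h}_{a,ij}$ therefore vanishes, and $d_{\Sigma,\mathrm{scene}}=0$ by \eqref{eq:dsigma_scene}.

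($\Rightarrow$) If $d_{\Sigma,\mathrm{scene}}(s_i,s_j)=0$, the sum in \eqref{eq:dsigma_scene} is a sum of nonnegative terms $\Delta\widetilde{\mathbf h}^H\Sigma_a^{-1}\Delta\widetilde{\mathbf h}$, using $\Sigma_a^{-1}\succ0$. Every term is therefore zero, and positive definiteness forces $\Delta\widetilde{\mathbf h}_{a,ij}=0$ for each $a$. Two circularly symmetric complex Gaussians with equal means and equal covariance $\Sigma_a$ are identical. This yields \eqref{eq:scene_equiv} for all $a$, i.e. $s_i\simU s_j$.

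Optionally, I would note the identity $D_{\mathrm{KL}}\big(\mathcal{CN}(\mu_1,\Sigma)\,\|\,\mathcal{CN}(\mu_2,\Sigma)\big)=(\mu_1-\mu_2)^H\Sigma^{-1}(\mu_1-\mu_2)$. Summed over $a$, this shows that $d^2_{\Sigma,\mathrm{scene}}$ equals the total KL divergence between the canonicalized distributions. That gives a one-line proof of both directions at once, since KL vanishes iff the distributions coincide, and it anticipates the later information-geometric interpretation. If $\cU_{\act}$ is infinite, I will assume the sum converges; the termwise argument is unchanged.
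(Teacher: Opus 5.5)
Your proof is correct and follows the paper's argument. Both directions reduce to $\widetilde{\mathbf h}(a;s_i)=\widetilde{\mathbf h}(a;s_j)$ for all $a$: the forward direction uses $\Sigma_a^{-1}\succ 0$, and the reverse uses the fact that Gaussians with a common, scene-independent covariance are determined by their means. The only difference is that you state explicitly what the paper's proof assumes silently, namely that canonicalized observations are distributed as $\mathcal{CN}(\widetilde{\mathbf h}(a;s),\Sigma_a)$. That assumption is genuinely necessary: even with no nuisance present, a nonlinear $\calop$ such as $\mathbf y\mapsto\mathbf y/\|\mathbf y\|$ gives $d_{\Sigma,\mathrm{scene}}=0$ for $\mathbf h(a;s_i)=2\,\mathbf h(a;s_j)$, although the canonicalized laws differ.
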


\begin{proof}
We prove the two directions separately.

First, suppose $d_{\Sigma,\mathrm{scene}}(s_i,s_j)=0$. By Eq.~\eqref{eq:dsigma_scene}, the distinguishability metric is a sum of non-negative action-wise quadratic terms. Since the sum equals zero, each term must be zero individually. Since $\Sigma_a \succ 0$, its inverse $\Sigma_a^{-1}$ is also positive definite, and therefore
\[
\Delta\widetilde{\mathbf h}_{a,ij}^{H}
\Sigma_a^{-1}
\Delta\widetilde{\mathbf h}_{a,ij}
=0
\]
if and only if $\Delta\widetilde{\mathbf h}_{a,ij}=0$. Hence
$\widetilde{\mathbf h}(a;s_i)=\widetilde{\mathbf h}(a;s_j)$ for all $a\in\cU_{\act}$. Under the Gaussian noise model with common action-specific covariance $\Sigma_a$, equality of canonicalized mean responses implies equality of the canonicalized measurement distributions for every admissible action. Therefore,
\[
p(\widetilde{\cY}\mid s_i,a)
=
p(\widetilde{\cY}\mid s_j,a),
\qquad
\forall a\in\cU_{\act},
\]
which is precisely $s_i\simU s_j$ by Eq.~\eqref{eq:scene_equiv}.

Conversely, suppose $s_i\simU s_j$. Then the canonicalized measurement distributions induced by $s_i$ and $s_j$ are identical for all admissible actions. Under the Gaussian model with the same covariance $\Sigma_a$ for a fixed action, equality of distributions implies equality of the mean vectors:
\[
\widetilde{\mathbf h}(a;s_i)
=
\widetilde{\mathbf h}(a;s_j),
\qquad
\forall a\in\cU_{\act}.
\]
Thus $\Delta\widetilde{\mathbf h}_{a,ij}=0$ for every action, and Eq.~\eqref{eq:dsigma_scene} gives
$d_{\Sigma,\mathrm{scene}}(s_i,s_j)=0$.
\end{proof}

Theorem~\ref{thm:consistency} establishes that the scene-relevant distinguishability metric is a complete characterization of quotient equivalence under the assumed Gaussian sensing model and ideal canonicalization condition. This result provides theoretical justification for using $d_{\Sigma,\mathrm{scene}}$ as the metric basis for constructing the reliable pairwise supervision sets in Eqs.~\eqref{eq:peq_scene}--\eqref{eq:pamb_scene}.

\subsection{Canonicalization Stability}
\label{subsec:canonicalization_stability}

In practice, the canonicalization operator may be estimated from calibration data or approximated from observations rather than given exactly. Let $\widehat{\calop}$ denote an approximate canonicalization operator, and let $\widehat d_{\Sigma,\mathrm{scene}}$ denote the distinguishability metric computed using $\widehat{\calop}$ in place of the ideal operator $\calop$. Since nuisance canonicalization is generally estimated rather than ideal, an important question is whether quotient assignments remain stable under canonicalization error. The following result shows that bounded canonicalization error induces only bounded perturbations in scene-relevant distinguishability. Throughout this subsection, we use the weighted norm
\[
\|x\|_{\Sigma_a^{-1}}
=
\big(
x^H\Sigma_a^{-1}x
\big)^{1/2}.
\]

\begin{theorem}[Canonicalization Stability]
\label{thm:stability}
Assume that there exists $\delta_{\calop} \ge 0$ such that for every scene configuration $s \in \mathcal{S}$ and every action $a \in \cU_{\act}$, the canonicalized response computed under $\widehat{\calop}$ differs from the ideal canonicalized response by at most $\delta_{\calop}$ in the $\Sigma_a^{-1}$-norm:
\begin{equation}
\big\|
\widehat{\widetilde{\mathbf h}}(a;s)
-
\widetilde{\mathbf h}(a;s)
\big\|_{\Sigma_a^{-1}}
\le
\delta_{\calop}.
\label{eq:canon_error_bound}
\end{equation}
Then, for any two scene configurations $s_i,s_j \in \mathcal{S}$,
\begin{equation}
\big|
\widehat d_{\Sigma,\mathrm{scene}}(s_i,s_j)
-
d_{\Sigma,\mathrm{scene}}(s_i,s_j)
\big|
\le
2\delta_{\calop}\sqrt{|\cU_{\act}|}.
\label{eq:stability_bound}
\end{equation}
\end{theorem}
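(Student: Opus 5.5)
The plan is to view $d_{\Sigma,\mathrm{scene}}$ as a Euclidean-type norm on a product space and then apply the triangle inequality twice. Fix $s_i,s_j$. For each action $a$, set $u_a=\Delta\widetilde{\mathbf h}_{a,ij}$, the ideal canonicalized difference, and $\widehat u_a=\widehat{\widetilde{\mathbf h}}(a;s_i)-\widehat{\widetilde{\mathbf h}}(a;s_j)$, its counterpart under $\widehat{\calop}$. Equip the product space $\prod_{a\in\cU_{\act}}\C^{Q}$ with the norm
\[
\|x\|_{\ast}=\Big(\sum_{a\in\cU_{\act}}\|x_a\|_{\Sigma_a^{-1}}^2\Big)^{1/2}.
\]
This is a genuine norm, because each $\Sigma_a^{-1}\succ 0$ makes $\|\cdot\|_{\Sigma_a^{-1}}$ a norm and an $\ell_2$-combination of norms is again a norm. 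By Eq.~\eqref{eq:dsigma_scene}, $d_{\Sigma,\mathrm{scene}}(s_i,s_j)=\|u\|_{\ast}$ and $\widehat d_{\Sigma,\mathrm{scene}}(s_i,s_j)=\|\widehat u\|_{\ast}$.

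The reverse triangle inequality in $\|\cdot\|_{\ast}$ then gives $|\widehat d-d|=\big|\|\widehat u\|_{\ast}-\|u\|_{\ast}\big|\le\|\widehat u-u\|_{\ast}$. Next, for each action $a$, write
\[
\widehat u_a-u_a=\big(\widehat{\widetilde{\mathbf h}}(a;s_i)-\widetilde{\mathbf h}(a;s_i)\big)-\big(\widehat{\widetilde{\mathbf h}}(a;s_j)-\widetilde{\mathbf h}(a;s_j)\big).
\]
The triangle inequality in $\|\cdot\|_{\Sigma_a^{-1}}$, combined with the hypothesis Eq.~\eqref{eq:canon_error_bound} applied to $s_i$ and to $s_j$, yields $\|\widehat u_a-u_a\|_{\Sigma_a^{-1}}\le 2\delta_{\calop}$. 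Squaring and summing over the $|\cU_{\act}|$ actions gives $\|\widehat u-u\|_{\ast}^2\le 4\delta_{\calop}^2|\cU_{\act}|$, and taking square roots gives Eq.~\eqref{eq:stability_bound}.

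The only step that needs care is confirming that $\|\cdot\|_{\ast}$ satisfies the triangle inequality; the rest is bookkeeping. To verify it, I would write $\|x_a\|_{\Sigma_a^{-1}}=\|\Sigma_a^{-1/2}x_a\|_2$. Then $\|x\|_{\ast}$ is the ordinary Euclidean norm of the stacked vector $(\Sigma_a^{-1/2}x_a)_a$, so both the triangle inequality and its reverse form are inherited from $\ell_2$.

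I would also remark that the bound is stated for the unsquared metric. For the squared metric, the same argument would instead give the bound $|\widehat d^2-d^2|\le 2\delta_{\calop}\sqrt{|\cU_{\act}|}\,(\widehat d+d)$.
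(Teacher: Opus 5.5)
Your proof is correct and is essentially the paper's argument. The paper applies the reverse triangle inequality action-by-action in the $\Sigma_a^{-1}$-norm, bounds each per-action difference by $2\delta_{\mathcal{C}}$ via the triangle inequality exactly as you do, and then applies the reverse triangle inequality again to the Euclidean vector of action-wise norms; your stacked norm $\|\cdot\|_{\ast}$ merges those two reverse-triangle steps into one, which is slightly cleaner but not a different route.
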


\begin{proof}
Let
\[
\widehat{\Delta}_{a,ij}
=
\widehat{\widetilde{\mathbf h}}(a;s_i)
-
\widehat{\widetilde{\mathbf h}}(a;s_j),
\qquad
\Delta_{a,ij}
=
\widetilde{\mathbf h}(a;s_i)
-
\widetilde{\mathbf h}(a;s_j).
\]
For each action $a$, the reverse triangle inequality in the $\Sigma_a^{-1}$-norm gives
\begin{equation}
\big|
\|\widehat{\Delta}_{a,ij}\|_{\Sigma_a^{-1}}
-
\|\Delta_{a,ij}\|_{\Sigma_a^{-1}}
\big|
\le
\|\widehat{\Delta}_{a,ij}-\Delta_{a,ij}\|_{\Sigma_a^{-1}}.
\end{equation}
The difference between the estimated and ideal response differences is bounded by the two per-scene canonicalization errors:
\begin{equation}
\begin{aligned}
\|\widehat{\Delta}_{a,ij}-\Delta_{a,ij}\|_{\Sigma_a^{-1}}
&\le
\big\|
\widehat{\widetilde{\mathbf h}}(a;s_i)
-
\widetilde{\mathbf h}(a;s_i)
\big\|_{\Sigma_a^{-1}}
\\
&\quad+
\big\|
\widehat{\widetilde{\mathbf h}}(a;s_j)
-
\widetilde{\mathbf h}(a;s_j)
\big\|_{\Sigma_a^{-1}}
\\
&\le
2\delta_{\calop}.
\end{aligned}
\end{equation}
Now view $d_{\Sigma,\mathrm{scene}}(s_i,s_j)$ as the Euclidean norm of the vector whose entries are
$\|\Delta_{a,ij}\|_{\Sigma_a^{-1}}$ over $a\in\cU_{\act}$, and similarly for $\widehat d_{\Sigma,\mathrm{scene}}(s_i,s_j)$. Applying the reverse triangle inequality to these action-wise norm vectors yields
\begin{equation}
\begin{aligned}
&
\big|
\widehat d_{\Sigma,\mathrm{scene}}(s_i,s_j)
-
d_{\Sigma,\mathrm{scene}}(s_i,s_j)
\big|
\\
&\le
\left(
\sum_{a\in\cU_{\act}}
(2\delta_{\calop})^2
\right)^{1/2}
\\
&=
2\delta_{\calop}
\sqrt{|\cU_{\act}|}.
\end{aligned}
\end{equation}
\end{proof}

\begin{corollary}[Reliable Supervision Stability]
\label{cor:assignment_stability}
Let $\epsilon_{\mathrm{eq}}<\epsilon_{\mathrm{dist}}$ be the thresholds defining the reliable pairwise supervision sets in Eqs.~\eqref{eq:peq_scene}--\eqref{eq:pamb_scene}, and define
\[
\rho_{\calop}
=
2\delta_{\calop}\sqrt{|\cU_{\act}|}.
\]
If $\rho_{\calop}<(\epsilon_{\mathrm{dist}}-\epsilon_{\mathrm{eq}})/2$, then no pair classified as equivalent under the ideal metric can be classified as distinguishable under the estimated metric, and no pair classified as distinguishable under the ideal metric can be classified as equivalent under the estimated metric. Pairs within or near the ambiguous region may still move into or out of the ambiguous band, but no reliable equivalent or distinguishable supervision pair is reversed.
\end{corollary}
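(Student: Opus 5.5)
This is a direct consequence of Theorem~\ref{thm:stability}, which gives, for every pair $(i,j)$,
\[
\bigl|\widehat d_{\Sigma,\mathrm{scene}}(s_i,s_j)-d_{\Sigma,\mathrm{scene}}(s_i,s_j)\bigr|\le\rho_{\calop}.
\]
Both non-reversal claims then reduce to showing that a perturbation of size at most $\rho_{\calop}$ cannot carry a distance across the gap $[\epsilon_{\mathrm{eq}},\epsilon_{\mathrm{dist}}]$.

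\emph{Ideal-equivalent pairs.} Take $(i,j)\in\mathcal{P}_{\eq}^{\scene}$ under the ideal metric, so $d_{\Sigma,\mathrm{scene}}(s_i,s_j)\le\epsilon_{\mathrm{eq}}$. Theorem~\ref{thm:stability} gives
\[
\widehat d_{\Sigma,\mathrm{scene}}(s_i,s_j)\le\epsilon_{\mathrm{eq}}+\rho_{\calop}.
\]
The hypothesis $\rho_{\calop}<(\epsilon_{\mathrm{dist}}-\epsilon_{\mathrm{eq}})/2$ implies $\epsilon_{\mathrm{eq}}+\rho_{\calop}<\epsilon_{\mathrm{dist}}$. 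Hence $\widehat d_{\Sigma,\mathrm{scene}}(s_i,s_j)<\epsilon_{\mathrm{dist}}$, and the pair cannot lie in the estimated distinguishable set $\widehat{\mathcal{P}}_{\dist}^{\scene}$.

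\emph{Ideal-distinguishable pairs.} Take $(i,j)$ with $d_{\Sigma,\mathrm{scene}}(s_i,s_j)\ge\epsilon_{\mathrm{dist}}$. By the same bound,
\[
\widehat d_{\Sigma,\mathrm{scene}}(s_i,s_j)\ge\epsilon_{\mathrm{dist}}-\rho_{\calop}>\epsilon_{\mathrm{eq}},
\]
so the pair is excluded from the estimated equivalent set $\widehat{\mathcal{P}}_{\eq}^{\scene}$.

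Nothing is claimed about ambiguous pairs. A pair near either threshold may move into or out of $\mathcal{P}_{\amb}^{\scene}$, since $\rho_{\calop}>0$ can push its distance across a single threshold. This is exactly the qualitative caveat in the statement.

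There is no substantive obstacle. The only point needing care is the strict versus non-strict inequalities at the thresholds. The argument actually needs only $\rho_{\calop}<\epsilon_{\mathrm{dist}}-\epsilon_{\mathrm{eq}}$, so the stated factor $1/2$ is sufficient but conservative. The extra slack also covers the symmetric situation where both the ideal and estimated metrics are perturbed around a common reference.
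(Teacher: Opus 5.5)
Your proof is correct and follows the paper's argument exactly: apply Theorem~\ref{thm:stability} to bound every pairwise change by $\rho_{\calop}$, then show that ideal-equivalent pairs stay strictly below $\epsilon_{\mathrm{dist}}$ and ideal-distinguishable pairs stay strictly above $\epsilon_{\mathrm{eq}}$. Your remark that only $\rho_{\calop}<\epsilon_{\mathrm{dist}}-\epsilon_{\mathrm{eq}}$ is actually used applies equally to the paper's proof, so the factor $1/2$ in the hypothesis is indeed conservative.
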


\begin{proof}
By Theorem~\ref{thm:stability}, every pairwise distinguishability value changes by at most $\rho_{\calop}$. If $(i,j)\in\mathcal{P}_{\eq}^{\scene}$ under the ideal metric, then
\[
d_{\Sigma,\mathrm{scene}}(s_i,s_j)\le \epsilon_{\mathrm{eq}},
\]
and therefore
\[
\widehat d_{\Sigma,\mathrm{scene}}(s_i,s_j)
\le
\epsilon_{\mathrm{eq}}
+
\rho_{\calop}
<
\epsilon_{\mathrm{dist}}.
\]
Thus the pair cannot be classified as distinguishable under the estimated metric. Similarly, if $(i,j)\in\mathcal{P}_{\dist}^{\scene}$ under the ideal metric, then
\[
d_{\Sigma,\mathrm{scene}}(s_i,s_j)\ge \epsilon_{\mathrm{dist}},
\]
and hence
\[
\widehat d_{\Sigma,\mathrm{scene}}(s_i,s_j)
\ge
\epsilon_{\mathrm{dist}}
-
\rho_{\calop}
>
\epsilon_{\mathrm{eq}}.
\]
Thus the pair cannot be classified as equivalent under the estimated metric.
\end{proof}

This result shows that bounded canonicalization error cannot reverse reliable supervision decisions when the error is small relative to the gap between the equivalence and distinguishability thresholds. Pairs near the threshold boundaries may still move into or out of the ambiguous band, but such pairs are excluded from hard supervision. The stability result therefore supports the use of estimated canonicalization in settings where the induced perturbation of $d_{\Sigma,\mathrm{scene}}$ remains below the reliable supervision margin.

Consequently, quotient-oriented supervision remains well defined under imperfect nuisance canonicalization. As long as the canonicalization error remains below the supervision margin, reliable equivalent and distinguishable pairs remain correctly assigned.

\subsection{Diagnostic Consistency of Learning Objectives}
\label{subsec:diagnostic_consistency_of_learning_objectives}
The proposed diagnostics are intended to evaluate quotient consistency, whereas the learning objectives are used to optimize it. An important question is therefore whether improvements in the learning objectives translate into improvements in the diagnostic criteria.

The learning objectives introduced in Section~\ref{sec:scene_nuisance_factorized_oq_tsae} are designed to encourage a latent geometry that is consistent with the scene-relevant quotient structure. The following result shows that reducing the contraction and separation objectives also controls the corresponding diagnostic error rates used to assess representation correctness.

\begin{theorem}[Diagnostic Consistency of Quotient Geometry]
\label{thm:soundness}
Let
\[
\bar z^{\scene}:\mathcal S\to\mathbb S^{d-1}
\]
denote the normalized scene embedding. Suppose that the empirical contraction and separation losses
in Eqs.~\eqref{eq:L_eq}--\eqref{eq:L_sep}
satisfy
\[
\mathcal L_{\eq}\le \alpha_{\eq},
\qquad
\mathcal L_{\dist}\le \alpha_{\dist}.
\]
For any false-distinction diagnostic threshold $\delta_{\eq}>0$,
\begin{equation}
\operatorname{FDR}(\delta_{\eq})
\le
\frac{\alpha_{\eq}}{\delta_{\eq}^2}.
\label{eq:fdr_bound_revised}
\end{equation}
Moreover, for any false-merge diagnostic threshold $\delta_{\dist}<m_{\dist}$,
\begin{equation}
\operatorname{FMR}(\delta_{\dist})
\le
\frac{\alpha_{\dist}}{(m_{\dist}-\delta_{\dist})^2}.
\label{eq:fmr_bound_revised}
\end{equation}
Consequently, if $\alpha_{\eq}\to0$ and $\alpha_{\dist}\to0$, then the corresponding false distinction and false merge rates vanish for fixed diagnostic thresholds satisfying $\delta_{\dist}<m_{\dist}$.
\end{theorem}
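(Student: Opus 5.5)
The plan is a Markov-type counting argument applied to the empirical losses. The losses in Eqs.~\eqref{eq:L_eq}--\eqref{eq:L_sep} are not yet displayed, so I will assume their natural forms. The contraction loss is the average squared latent distance over reliable equivalent pairs,
\[
\mathcal L_{\eq}=\frac{1}{|\mathcal P_{\eq}^{\scene}|}\sum_{(i,j)\in\mathcal P_{\eq}^{\scene}}\|\bar z^{\scene}(s_i)-\bar z^{\scene}(s_j)\|^2 .
\]
The separation loss is a squared hinge with margin $m_{\dist}$,
\[
\mathcal L_{\dist}=\frac{1}{|\mathcal P_{\dist}^{\scene}|}\sum_{(i,j)\in\mathcal P_{\dist}^{\scene}}\big[m_{\dist}-\|\bar z^{\scene}(s_i)-\bar z^{\scene}(s_j)\|\big]_+^2 .
\]
Correspondingly, $\operatorname{FDR}(\delta_{\eq})$ is the fraction of pairs in $\mathcal P_{\eq}^{\scene}$ whose latent distance exceeds $\delta_{\eq}$. $\operatorname{FMR}(\delta_{\dist})$ is the fraction of pairs in $\mathcal P_{\dist}^{\scene}$ whose latent distance is at most $\delta_{\dist}$. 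If the paper's definitions differ by constants, the same argument applies with adjusted constants.

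\textbf{False distinction rate.} Write $D_{ij}=\|\bar z^{\scene}(s_i)-\bar z^{\scene}(s_j)\|$. Every pair counted as a false distinction has $D_{ij}>\delta_{\eq}$, so $D_{ij}^2>\delta_{\eq}^2$. Since all terms of $\mathcal L_{\eq}$ are nonnegative, I lower-bound the average by the contribution of these pairs alone:
\[
\mathcal L_{\eq}\ge \delta_{\eq}^2\,\operatorname{FDR}(\delta_{\eq}).
\]
Dividing by $\delta_{\eq}^2$ and using $\mathcal L_{\eq}\le\alpha_{\eq}$ gives Eq.~\eqref{eq:fdr_bound_revised}. This is exactly Markov's inequality for the empirical distribution over $\mathcal P_{\eq}^{\scene}$.

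\textbf{False merge rate.} A false merge has $D_{ij}\le\delta_{\dist}<m_{\dist}$. Its hinge term is then at least $(m_{\dist}-\delta_{\dist})^2>0$. Dropping the other nonnegative terms gives
\[
\mathcal L_{\dist}\ge (m_{\dist}-\delta_{\dist})^2\operatorname{FMR}(\delta_{\dist}),
\]
and dividing yields Eq.~\eqref{eq:fmr_bound_revised}. The consequence follows by letting $\alpha_{\eq},\alpha_{\dist}\to0$ with the thresholds fixed. The right-hand sides tend to zero, so both rates vanish.

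\textbf{Main obstacle.} There is no analytic difficulty; the risk is in the bookkeeping. Three things must match. First, the diagnostics must be computed on the same pair sets, with the same normalization, as the losses. Second, the distance must be the same, either Euclidean or chordal on $\mathbb S^{d-1}$. Third, the squared form of the hinge is what produces the stated denominator. If the separation loss is an unsquared hinge, I would instead get the bound $\alpha_{\dist}/(m_{\dist}-\delta_{\dist})$ and note the difference. If the contraction loss uses a cosine distance $1-\cos$, I would use $\|u-v\|^2=2(1-\cos)$ on the sphere to transfer the bound.
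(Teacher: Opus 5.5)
Your proposal is correct and is essentially the paper's proof: the paper applies Markov's inequality under the empirical distribution to the squared latent distance on $\mathcal P_{\eq}^{\scene}$, and to the squared hinge $[m_{\dist}-r_{ij}]_+^2$ on $\mathcal P_{\dist}^{\scene}$ using $[m_{\dist}-r_{ij}]_+^2\ge(m_{\dist}-\delta_{\dist})^2$ whenever $r_{ij}\le\delta_{\dist}$, which is exactly your counting argument. Your assumed forms of $\mathcal L_{\eq}$, $\mathcal L_{\dist}$, FDR and FMR coincide with the paper's definitions (mean squared distance, squared hinge, and strict versus non-strict thresholds), so no constant adjustments are needed.
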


\begin{proof}
For equivalent pairs, Markov's inequality applied to the non-negative random variable
\[
\left\|
\bar z_i^{\scene}
-
\bar z_j^{\scene}
\right\|_2^2,
\qquad
(i,j)\in\mathcal P_{\eq}^{\scene},
\]
gives
\[
\begin{aligned}
\operatorname{FDR}(\delta_{\eq})
&=
\Pr\!\left[
\left\|
\bar z_i^{\scene}
-
\bar z_j^{\scene}
\right\|_2
>
\delta_{\eq}
\;\middle|\;
(i,j)\in\mathcal P_{\eq}^{\scene}
\right]
\\
&\le
\frac{
\mathbb E_{\mathcal P_{\eq}^{\scene}}
\left[
\left\|
\bar z_i^{\scene}
-
\bar z_j^{\scene}
\right\|_2^2
\right]
}{
\delta_{\eq}^2
}
=
\frac{\mathcal L_{\eq}}{\delta_{\eq}^2}
\le
\frac{\alpha_{\eq}}{\delta_{\eq}^2}.
\end{aligned}
\]

For distinguishable pairs, define
\[
r_{ij}
=
\left\|
\bar z_i^{\scene}
-
\bar z_j^{\scene}
\right\|_2 .
\]
If $r_{ij}\le\delta_{\dist}$ and $\delta_{\dist}<m_{\dist}$, then
\[
[m_{\dist}-r_{ij}]_+^2
\ge
(m_{\dist}-\delta_{\dist})^2 .
\]
Applying Markov's inequality to the non-negative separation-violation variable
\[
[m_{\dist}-r_{ij}]_+^2,
\qquad
(i,j)\in\mathcal P_{\dist}^{\scene},
\]
yields
\[
\begin{aligned}
\operatorname{FMR}(\delta_{\dist})
&=
\Pr\!\left[
r_{ij}
\le
\delta_{\dist}
\;\middle|\;
(i,j)\in\mathcal P_{\dist}^{\scene}
\right]
\\
&\le
\frac{
\mathbb E_{\mathcal P_{\dist}^{\scene}}
\left[
[m_{\dist}-r_{ij}]_+^2
\right]
}{
(m_{\dist}-\delta_{\dist})^2
}
=
\frac{\mathcal L_{\dist}}{(m_{\dist}-\delta_{\dist})^2}
\\
&\le
\frac{\alpha_{\dist}}{(m_{\dist}-\delta_{\dist})^2}.
\end{aligned}
\]
Combining the above inequalities with the assumed bounds
$\mathcal L_{\eq}\le\alpha_{\eq}$
and
$\mathcal L_{\dist}\le\alpha_{\dist}$
establishes the stated result.
\end{proof}

Theorem~\ref{thm:soundness} establishes that reducing the contraction and separation objectives also reduces the empirical diagnostic error rates used to evaluate representation correctness. Although the theorem does not guarantee perfect quotient recovery in finite data, it provides theoretical support for the use of the contraction and separation losses as surrogates for the representation-level diagnostics reported in Section~\ref{sec:experimental_evaluation}.

\subsection{Information-Geometric Interpretation}
\label{subsec:information_geometric_interpretation}

Although not required for the quotient construction itself, the scene-relevant distinguishability metric also admits an information-geometric interpretation~\cite{amari2016information}. We present this connection to clarify how the proposed metric relates to statistical distances between canonicalized measurement distributions and local statistical geometry induced by canonicalized sensing distributions.

\begin{theorem}[Connection to Symmetric KL Divergence]
\label{thm:fisher_rao}
Under the Gaussian noise model of Eq.~\eqref{eq:noise_model}, assume that the action-wise noise covariance is action-independent, i.e., $\Sigma_a=\Sigma$ for all $a\in\cU_{\act}$. Let
\[
p_i
=
\prod_{a\in\cU_{\act}}
\mathcal{CN}
\big(
\widetilde{\mathbf h}(a;s_i),
\Sigma
\big)
\]
denote the product distribution of canonicalized measurements induced by scene $s_i$ over all admissible actions. Then the squared scene-relevant distinguishability metric is proportional to the symmetric Kullback--Leibler divergence:
\begin{equation}
d_{\Sigma,\mathrm{scene}}^2(s_i,s_j)
\propto
D_{\mathrm{KL}}(p_i\|p_j)
+
D_{\mathrm{KL}}(p_j\|p_i),
\label{eq:fisher_rao_eq}
\end{equation}
where the proportionality constant depends only on the Gaussian convention adopted for the complex-valued measurement model.

Moreover, for nearby canonicalized response distributions, this symmetric KL divergence provides a second-order local approximation to the squared Fisher--Rao distance.
\end{theorem}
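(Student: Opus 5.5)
The plan is to compute the KL divergence between the two product distributions in closed form and compare it with Eq.~\eqref{eq:dsigma_scene}. The key simplification is that both $p_i$ and $p_j$ are products over $a\in\cU_{\act}$ of independent factors. KL divergence is additive over independent product components, so
\[
D_{\mathrm{KL}}(p_i\|p_j)=\sum_{a\in\cU_{\act}} D_{\mathrm{KL}}\big(\mathcal{CN}(\widetilde{\mathbf h}(a;s_i),\Sigma)\,\big\|\,\mathcal{CN}(\widetilde{\mathbf h}(a;s_j),\Sigma)\big).
\]
This reduces the statement to a single-action computation.

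For one action, I will use the density of a circularly symmetric complex Gaussian with covariance $\Sigma\succ0$:
\[
\pi^{-Q}\det(\Sigma)^{-1}\exp\big(-(\mathbf y-\boldsymbol\mu)^H\Sigma^{-1}(\mathbf y-\boldsymbol\mu)\big).
\]
The two covariances are equal, so the log-determinant and trace terms cancel in the log-likelihood ratio. The expected log-ratio under the first distribution is then a difference of two quadratic forms. Expanding $\mathbf y-\boldsymbol\mu_j=(\mathbf y-\boldsymbol\mu_i)+(\boldsymbol\mu_i-\boldsymbol\mu_j)$, the cross term has zero mean and the noise terms cancel. What remains is
\[
D_{\mathrm{KL}}=\Delta\widetilde{\mathbf h}_{a,ij}^H\Sigma^{-1}\Delta\widetilde{\mathbf h}_{a,ij}.
\]
This expression is symmetric in $i,j$. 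Summing over actions and adding the reverse divergence gives
\[
D_{\mathrm{KL}}(p_i\|p_j)+D_{\mathrm{KL}}(p_j\|p_i)=2\,d_{\Sigma,\mathrm{scene}}^2(s_i,s_j),
\]
so the constant is $2$. Under the real-valued convention, where $\mathbf y$ is viewed as a $2Q$-dimensional real Gaussian with covariance $\tfrac12\begin{bmatrix}\Re\Sigma&-\Im\Sigma\\ \Im\Sigma&\Re\Sigma\end{bmatrix}$, the same Mahalanobis term appears with a convention-dependent factor. This is the sense of the clause ``depends only on the Gaussian convention.''

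The main point to handle carefully is the cancellation of the random cross term $2\Re\{(\mathbf y-\boldsymbol\mu_i)^H\Sigma^{-1}\Delta\}$, which has zero expectation. I also need to make sure that action-independence of $\Sigma$ is used only to state a single global constant. In fact the per-action identity holds with $\Sigma_a$, so the assumption is harmless.

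For the local Fisher--Rao claim, I will parametrize the mean-shift family by the canonicalized mean $\boldsymbol\mu$ with $\Sigma$ fixed. Its Fisher information (with respect to real and imaginary parts) is the constant metric induced by $2\Sigma^{-1}$. The standard expansion $D_{\mathrm{KL}}(p_{\theta}\|p_{\theta+d\theta})=\tfrac12 d\theta^\top I(\theta)\,d\theta+O(\|d\theta\|^3)$ holds for each direction. Adding the two directions, the third-order terms cancel at leading order, so the symmetric KL equals $d\theta^\top I\,d\theta$ up to higher order. This is the squared Fisher--Rao line element. Because $I$ is constant on this flat Gaussian location family, geodesics are straight lines, and the Fisher--Rao distance between nearby means equals that quadratic form. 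The approximation is therefore in fact exact in this family, which a fortiori proves the stated second-order local claim.
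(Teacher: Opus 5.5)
Your proposal is correct and follows essentially the same route as the paper's proof: additivity of KL over the action-wise product, the equal-covariance Gaussian formula that reduces each factor to $\Delta\widetilde{\mathbf h}_{a,ij}^{H}\Sigma^{-1}\Delta\widetilde{\mathbf h}_{a,ij}$, and the standard local KL--Fisher relationship. You work it out in more detail than the paper, including the explicit constant $2$ and the exact Fisher--Rao identity on the fixed-covariance location family. Two side remarks need slight correction: KL is invariant under the real embedding $\mathbb{C}^Q\to\mathbb{R}^{2Q}$, so your real-valued representation gives the same constant (it changes only if $\Sigma$ is reinterpreted, e.g.\ as the per-component covariance), and the exact Fisher--Rao identity holds on the flat fixed-covariance family but not on the full Gaussian manifold with free covariance, where only the stated second-order agreement survives.
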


\begin{proof}
For two multivariate complex Gaussian distributions with identical covariance $\Sigma$ and means $\mu_i,\mu_j$, the symmetric KL divergence is proportional to
\[
(\mu_i-\mu_j)^H
\Sigma^{-1}
(\mu_i-\mu_j).
\]
Applying this identity independently across the product distribution over admissible actions gives
\[
\sum_{a\in\cU_{\act}}
\Delta\widetilde{\mathbf h}_{a,ij}^{H}
\Sigma^{-1}
\Delta\widetilde{\mathbf h}_{a,ij},
\]
which is precisely Eq.~\eqref{eq:dsigma_scene} under action-independent covariance. The final statement follows from the standard local relationship between KL divergence and the Fisher information metric: for nearby distributions on a regular statistical manifold, KL-type divergences agree with the Fisher--Rao metric to second order.
\end{proof}

This result provides an information-geometric interpretation of
$d_{\Sigma,\mathrm{scene}}$
through the local geometry of statistical manifolds induced by canonicalized sensing distributions~\cite{amari2016information}. Under Gaussian noise, the metric measures statistical separation between canonicalized measurement distributions in a noise-weighted geometry induced by the sensing model. The quotient construction can therefore be viewed as identifying scene configurations that induce the same point on the canonicalized measurement-distribution manifold, while the distinguishability metric measures local statistical separation between quotient classes. This interpretation connects the proposed formulation to information geometry without requiring the learned latent space to be an exact isometric embedding of the quotient manifold.
\section{Scene--Nuisance Factorized OQ-TSAE}
\label{sec:scene_nuisance_factorized_oq_tsae}
\subsection{Design Rationale and Architectural Principles}
\label{subsec:design_rationale_and_architectural_principles}

The theoretical analysis of Section~\ref{sec:theoretical_analysis} establishes three requirements for a sensor-conditioned scene representation. First, the latent geometry should reflect the scene-relevant observation quotient, meaning that scene configurations that are observationally equivalent after nuisance canonicalization should be represented similarly, while reliably distinguishable scene configurations should remain separated. Second, the representation should suppress nuisance-driven variation arising from sensing hardware, calibration, and measurement non-idealities. Third, the representation should preserve sufficient information to support observable scene decoding and raw observation reconstruction.

To operationalize these requirements, we develop a quotient-oriented tensor autoencoding framework called Observation-Quotient Tucker-Structured Autoencoding (OQ-TSAE). The framework is designed around three architectural principles: (i) preserving shared multilinear sensing structure through a Tucker-based tensor encoder, (ii) separating scene-relevant and nuisance-dependent variation through factorized latent pathways, and (iii) shaping the scene representation through quotient-consistent supervision while retaining reconstruction compatibility. Figure~\ref{fig:oq_tsae_framework} summarizes the overall architecture and supervision structure before detailing each component.

\begin{figure*}
    \centering
    \includegraphics[width=\textwidth]
    {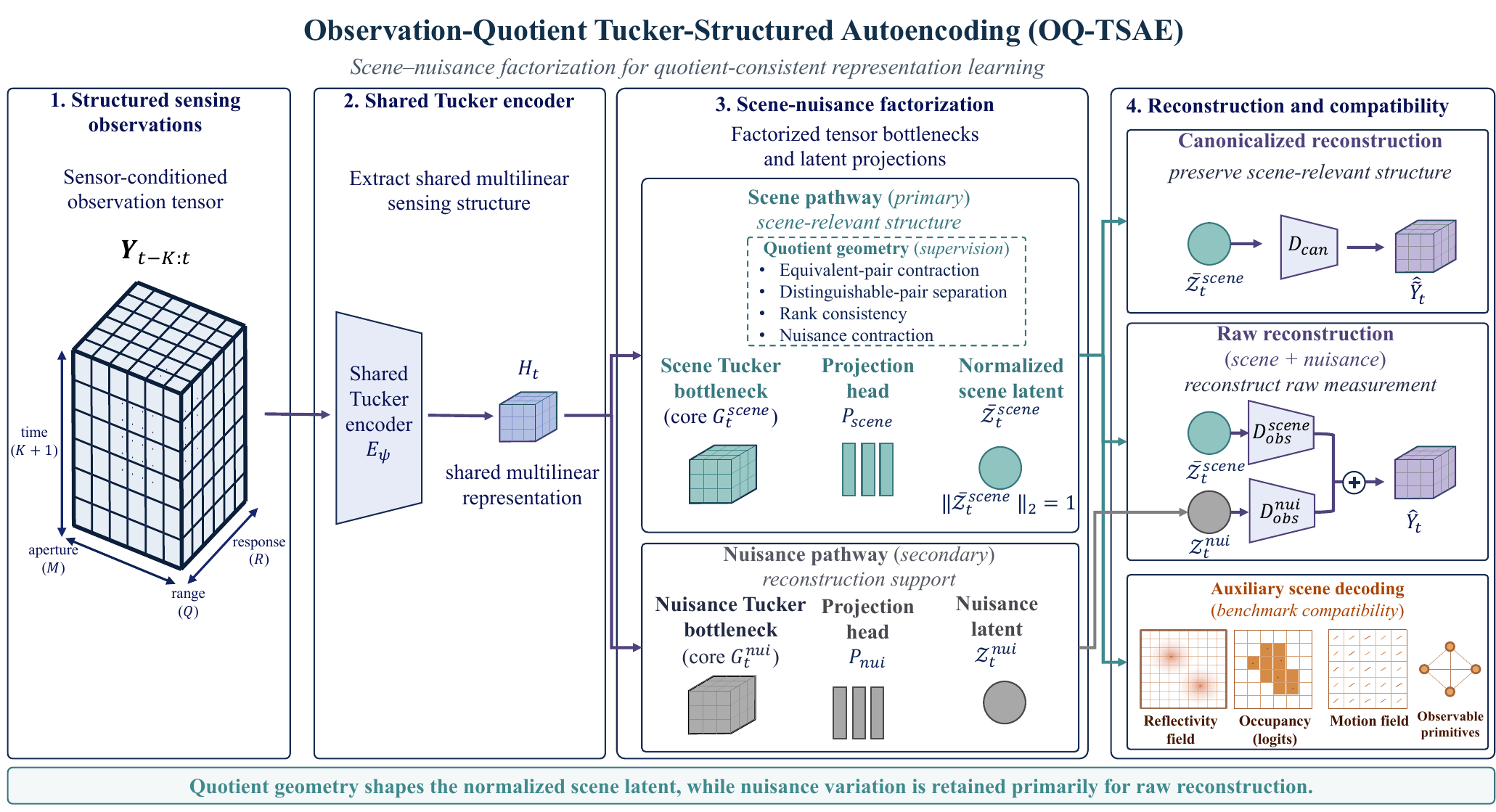}
    \caption{
    Overview of Observation-Quotient Tucker-Structured Autoencoding (OQ-TSAE). A shared Tucker encoder factorizes sensing observations into scene and nuisance pathways. Quotient geometry shapes the scene latent, whereas nuisance-dependent variation is retained primarily for raw observation reconstruction.
    }
    \label{fig:oq_tsae_framework}
\end{figure*}

Figure~\ref{fig:oq_tsae_framework} illustrates the central design logic of OQ-TSAE. Quotient geometry objectives shape the normalized scene latent, while raw observation reconstruction is supported through both scene and nuisance pathways and canonicalized reconstruction depends only on the scene pathway. Auxiliary scene decoders provide compatibility with observable scene interpretation.

These objectives are inherently in tension when using a single latent representation. A representation optimized primarily for reconstruction fidelity will naturally encode nuisance-driven measurement variation, since such variation contributes substantially to raw observations. Conversely, a representation forced to be nuisance-invariant may discard information needed to explain observation variability, degrading reconstruction quality and limiting compatibility with downstream sensing tasks.

To resolve this tension, OQ-TSAE adopts a scene--nuisance factorization strategy in which a shared tensor representation is decomposed into complementary scene and nuisance pathways. The scene pathway is explicitly constrained by quotient supervision derived from the scene-relevant distinguishability metric $d_{\Sigma,\mathrm{scene}}$, whereas the nuisance pathway retains additional representational capacity for hardware-dependent and calibration-related variation useful for reconstructing raw observations.

Importantly, the two pathways are not learned independently but share a common tensor encoder that extracts multilinear sensing structure before dedicated bottlenecks and projection heads produce separate scene and nuisance representations. This design allows quotient-consistent scene organization to coexist with reconstruction compatibility: the scene latent is encouraged to preserve sensing-supported scene distinctions, whereas nuisance-dependent measurement variation is retained primarily for raw observation reconstruction.

\subsection{Shared Tensor Encoder and Factorized Latent Pathways}
\label{subsec:shared_tensor_encoder_and_factorized_latent_pathwa}

The encoder $E_{\psi}$ takes as input the observation tensor
\[
\cY_{t-K:t}
\in
\mathbb C^{(K+1)\times M\times R\times Q},
\]
which aggregates temporal observations over a short sensing horizon. Rather than directly constructing separate scene and nuisance representations, the encoder first extracts a shared tensor representation that captures measurement structure jointly across temporal, sensing, response, and range modes.

Formally,
\begin{equation}
\cY_{t-K:t}
\xrightarrow{E_{\psi}}
\mathbf H_t,
\label{eq:shared_encoder}
\end{equation}
where $\mathbf H_t$ denotes an intermediate tensor representation produced by a multilinear encoder. The encoder preserves the tensor structure of the observation sequence through Tucker-style tensor operations and learned multilinear projections, allowing interactions across sensing modes to be modeled while maintaining a compact representation.

To separate scene-relevant structure from nuisance-driven variation, the shared tensor representation is subsequently processed through two dedicated Tucker bottlenecks:
\begin{equation}
\mathbf H_t
\rightarrow
(
\cG_t^{\scene},
\cG_t^{\nui}
),
\label{eq:dual_cores}
\end{equation}
where
\[
\cG_t^{\scene},
\cG_t^{\nui}
\in
\mathbb R^{
R_T\times R_M\times R_R\times R_Q
}
\]
denote the scene and nuisance tensor cores, respectively. These bottlenecks encourage scene-relevant and nuisance-related information to be routed into separate tensor pathways while preserving multilinear structure.

For convenience, we denote the concatenated tensor-core representation by

\[
\cG_t
=
[\cG_t^{\scene},\cG_t^{\nui}],
\]
which serves as a shared tensor-core feature representation for several auxiliary decoding heads introduced later in this section.

The tensor cores are subsequently flattened and mapped to latent representations through dedicated linear projection heads:
\begin{align}
z_t^{\scene}
&=
P_{\scene}
\!\left(
\operatorname{vec}
(
\cG_t^{\scene}
)
\right),
\label{eq:scene_latent}
\\
z_t^{\nui}
&=
P_{\nui}
\!\left(
\operatorname{vec}
(
\cG_t^{\nui}
)
\right),
\label{eq:nuis_latent}
\end{align}
where
$P_{\scene}$
and
$P_{\nui}$
are linear projection mappings producing the scene and nuisance latent codes.

The scene latent is explicitly normalized:
\begin{equation}
\bar z_t^{\scene}
=
\frac{
z_t^{\scene}
}{
\|z_t^{\scene}\|_2+\varepsilon
},
\label{eq:normalized_scene_latent}
\end{equation}
so that it lies on the unit sphere. This normalization serves an architectural purpose aligned with quotient supervision. Since quotient constraints operate on the normalized scene representation, scene organization must be encoded through the angular geometry of the latent space rather than through latent magnitude. Consequently, scene-equivalent configurations are encouraged to collapse geometrically, while reliably distinguishable scene variations remain separable through angular structure.

In contrast, the nuisance latent remains unnormalized, allowing greater flexibility for representing nuisance-related measurement variation useful for raw observation reconstruction.

\subsection{Tucker-Structured Observation Decoders}
\label{subsec:tucker_structured_observation_decoders}

The learned tensor representations are coupled to observation reconstruction through Tucker-structured decoders that preserve the multilinear organization of the sensing tensor. Rather than reconstructing observations directly from latent vectors through unrestricted fully connected mappings, reconstruction is performed from the factorized scene and nuisance tensor cores, allowing the decoder to exploit structured interactions across temporal, sensing, response, and range modes.

The raw observation reconstruction combines information from both scene and nuisance pathways:
\begin{equation}
\widehat{\cY}_t
=
D_{\obs}^{\scene}
(
\cG_t^{\scene}
)
+
D_{\obs}^{\nui}
(
\cG_t^{\nui}
),
\label{eq:raw_recon}
\end{equation}
where
$D_{\obs}^{\scene}$
and
$D_{\obs}^{\nui}$
denote Tucker-structured observation decoders operating on the scene and nuisance tensor cores, respectively. The scene pathway contributes scene-relevant measurement structure, while the nuisance pathway captures hardware-dependent and calibration-related variation that remains present in raw observations.

Canonicalized observation reconstruction depends only on the scene pathway:
\begin{equation}
\widehat{\widetilde{\cY}}_t
=
D_{\can}
(
\cG_t^{\scene}
),
\label{eq:can_recon}
\end{equation}
thereby removing nuisance-dependent variation by construction. Since nuisance canonicalization defines the quotient relation introduced in Section~\ref{sec:sensor_conditioned_scene_state_formulation}, reconstructing canonicalized observations from the scene tensor core encourages the learned representation to preserve scene-distinguishable measurement structure while suppressing nuisance-related variability.

Both observation decoders employ Tucker-structured multilinear transformations whose effective capacity is governed by the Tucker ranks
$(R_T,R_M,R_R,R_Q)$.
This tensor-structured design provides an inductive bias aligned with the multilinear organization of the sensing process while maintaining substantially lower effective complexity than unrestricted dense reconstruction.

Importantly, reconstruction is not treated as the primary objective of the proposed framework. Instead, reconstruction acts as a compatibility constraint ensuring that quotient-consistent scene organization remains sufficiently informative for observation recovery. Since reconstruction operates on tensor cores rather than directly on latent vectors, reconstruction gradients primarily supervise the factorized scene and nuisance pathways through the Tucker bottlenecks while quotient constraints continue to shape the geometry of the normalized scene latent.

\subsection{Auxiliary Scene Field and Primitive Decoders}
\label{subsec:auxiliary_scene_field_and_primitive_decoders}

Beyond observation reconstruction, the scene latent is connected to several auxiliary decoders that predict observable scene-level quantities for downstream interpretation. These heads are included to evaluate scene interpretability and downstream compatibility in the controlled benchmark; they are not required by the quotient formulation itself and should not be interpreted as defining the dimensionality of the quotient space.

The primary scene-field decoder maps the normalized scene latent to a canonicalized reflectivity field at a sensor-compatible spatial resolution:
\begin{equation}
\widehat{\widetilde{\boldsymbol{\rho}}}_{t}^{\res,\scene}
=
D_{\rho}^{\scene}
(
\bar z_t^{\scene}
)
\in
\mathbb C^{
N_x^{\res}
\times
N_y^{\res}
\times
N_z^{\res}
},
\label{eq:dec_rho_scene}
\end{equation}
which predicts the coarse scene-level reflectivity structure over the observable spatial grid in the controlled benchmark.

To improve local reconstruction quality, an auxiliary residual reflectivity decoder predicts a correction field:
\begin{equation}
\widehat{\widetilde{\boldsymbol{\rho}}}_{t}^{\res,\mathrm{res}}
=
D_{\rho}^{\mathrm{res}}
(
\operatorname{vec}(\cG_t)_{\mathrm{detach}}
),
\label{eq:dec_rho_res}
\end{equation}
where the residual pathway operates on detached tensor-core features to refine reflectivity prediction without directly altering the quotient-constrained latent geometry. The final reflectivity prediction is obtained through additive composition:
\begin{equation}
\widehat{\widetilde{\boldsymbol{\rho}}}_{t}^{\res}
=
\widehat{\widetilde{\boldsymbol{\rho}}}_{t}^{\res,\scene}
+
\alpha_{\rho}
\widehat{\widetilde{\boldsymbol{\rho}}}_{t}^{\res,\mathrm{res}},
\label{eq:dec_rho_final}
\end{equation}
where $\alpha_{\rho}$ is a fixed residual weighting coefficient.

In addition, the benchmark-instantiated motion decoder predicts a scene-level planar motion field:
\begin{equation}
\widehat{\boldsymbol{\upsilon}}_{t}^{\res}
=
D_{v}
(
\bar z_t^{\scene}
)
\in
\mathbb R^{
2
\times
N_x^{\res}
\times
N_y^{\res}
\times
N_z^{\res}
},
\label{eq:dec_v}
\end{equation}
providing a spatially resolved estimate of the observable Doppler-related planar motion dynamics used in the controlled benchmark.

An auxiliary occupancy head predicts grid-cell occupancy logits from the scene latent:
\begin{equation}
\widehat{\mathbf o}_t
=
D_{\mathrm{occ}}
(
\bar z_t^{\scene}
),
\label{eq:occ_head}
\end{equation}
which provides additional supervision for sparse scene localization over the observable spatial grid.

The model additionally predicts a lightweight primitive-level scene representation consisting of object presence, position, and reflectivity attributes:
\begin{equation}
\widehat{\mathbf p}_t
=
D_{\mathrm{prim}}
(
\bar z_t^{\scene}
),
\label{eq:primitive_head}
\end{equation}
serving as an auxiliary structured scene representation compatible with downstream object-level interpretation. A primitive-rendered reflectivity field may further be derived from these predictions during training as an auxiliary regularization signal.

Several interpretable outputs are obtained through lightweight post-processing of these decoded quantities. Occupancy estimates are derived from occupancy logits or reflectivity magnitude thresholding, target locations are obtained through peak detection on the reflectivity field, and motion estimates are extracted from the decoded motion field at detected target locations.

Importantly, these auxiliary decoders do not define the quotient geometry itself. Rather, they serve as evaluation- and compatibility-oriented heads used to assess whether quotient-consistent scene organization preserves sufficient information for observable scene interpretation after nuisance-related variation has been suppressed.

\section{Learning Objectives and Evaluation Metrics}
\label{sec:learning_objectives_and_evaluation_metrics}

\subsection{Overview of the Training Objectives}
\label{subsec:overview_of_the_training_objectives}

The training objective of OQ-TSAE combines three categories of losses: reconstruction objectives, quotient-geometry objectives, and auxiliary scene-output objectives.

Reconstruction objectives preserve compatibility with raw and canonicalized observations. Quotient-geometry objectives operate on the normalized scene latent and provide the primary learning signal through equivalent-pair contraction, distinguishable-pair separation, rank-consistency supervision, and nuisance-only invariance. Auxiliary scene-output objectives provide additional supervision for observable scene interpretation, including reflectivity, occupancy, motion, and primitive-level prediction.

Among these components, the quotient-geometry objectives
($\mathcal L_{\eq}$,
$\mathcal L_{\dist}$,
$\mathcal L_{\mathrm{rank}}$,
and
$\mathcal L_{\inv}$)
define the primary representation objective, whereas reconstruction and auxiliary scene-output losses serve mainly as compatibility constraints that preserve observation fidelity and observable scene interpretability. Accordingly, representation correctness is evaluated primarily through the quotient-oriented diagnostics introduced later in this section rather than through reconstruction metrics alone.

The following subsections describe each objective in detail.

\subsection{Reconstruction and Observation Compatibility Objectives}
\label{subsec:reconstruction_and_observation_compatibility_objec}

To maintain reconstruction compatibility, OQ-TSAE employs auxiliary objectives on both raw and canonicalized observations.

The raw observation reconstruction objective combines information from the scene and nuisance pathways:
\begin{equation}
\mathcal{L}_{\raw}
=
\big\|
\widehat{\cY}_t
-
\cY_t
\big\|_F^2,
\label{eq:L_raw}
\end{equation}
where
\[
\widehat{\cY}_t
=
D_{\obs}^{\scene}
(
\cG_t^{\scene}
)
+
D_{\obs}^{\nui}
(
\cG_t^{\nui}
).
\]
This objective encourages the representation to preserve information necessary for reconstructing raw observations, including nuisance-dependent measurement variation.

The canonicalized observation reconstruction objective depends only on the scene pathway:
\begin{equation}
\mathcal{L}_{\can}
=
\big\|
\widehat{\widetilde{\cY}}_t
-
\widetilde{\cY}_t
\big\|_F^2,
\label{eq:L_can}
\end{equation}
where
\[
\widehat{\widetilde{\cY}}_t
=
D_{\can}
(
\cG_t^{\scene}
),
\]
and
\[
\widetilde{\cY}_t
=
\calop(\cY_t)
\]
denotes the nuisance-canonicalized observation tensor. This objective encourages the scene pathway to preserve scene-relevant measurement structure after nuisance removal.

Masked observation reconstruction is further employed as an auxiliary regularizer:
\begin{equation}
\mathcal{L}_{\mathrm{mask}}
=
\sum_{u\in\mathcal M_{\raw}}
\|
\widehat y(u)-y(u)
\|_2^2
+
\sum_{u\in\mathcal M_{\can}}
\|
\widehat{\widetilde y}(u)
-
\widetilde y(u)
\|_2^2,
\label{eq:L_mask}
\end{equation}
where masked entries are reconstructed from partially observed raw and canonicalized tensors. This objective encourages exploitation of cross-mode structure under partial observation.

\subsection{Quotient-Geometry Objectives}
\label{subsec:quotient_geometry_objectives}

The quotient-geometry objectives provide the primary learning signal connecting the scene-relevant observation quotient to the learned scene representation. Although related in form to metric-learning objectives~\cite{hadsell2006dimensionality}, they are used here to enforce quotient-consistent scene organization induced by sensing-supported distinguishability. All quotient constraints operate on the normalized scene latent $\bar z_t^{\scene}$.

For reliably equivalent scene pairs $(i,j)\in\mathcal P_{\eq}^{\scene}$, the contraction objective is defined as
\begin{equation}
\mathcal L_{\eq}
=
\frac{1}{|\mathcal P_{\eq}^{\scene}|}
\sum_{(i,j)\in\mathcal P_{\eq}^{\scene}}
\left\|
\bar z_i^{\scene}
-
\bar z_j^{\scene}
\right\|_2^2 .
\label{eq:L_eq}
\end{equation}
This loss contracts scene pairs that remain indistinguishable after nuisance canonicalization.

For reliably distinguishable scene pairs $(i,j)\in\mathcal P_{\dist}^{\scene}$, the separation objective is
\begin{equation}
\mathcal L_{\dist} =
\frac{1}{|\mathcal P_{\dist}^{\scene}|}
\sum_{(i,j)\in\mathcal P_{\dist}^{\scene}}
\left[
m_{\dist}
-
\left\|
\bar z_i^{\scene}
-
\bar z_j^{\scene}
\right\|_2
\right]_+^2 ,
\label{eq:L_sep}
\end{equation}
where $m_{\dist}>0$ is the latent separation margin and $[x]_+=\max(x,0)$. This loss preserves separation between reliably distinguishable scene configurations.

To preserve coarse distinguishability ordering, we further use a squared-hinge rank-consistency objective over reliably ordered triplets $(i,j,k)\in\mathcal T^{\scene}$:
\begin{equation}
\begin{aligned}
\mathcal L_{\mathrm{rank}}
&=
\frac{1}{|\mathcal T^{\scene}|}
\sum_{(i,j,k)\in\mathcal T^{\scene}}
\Bigl[
m_{\mathrm{rank}}
\\
&\quad+
\left\|
\bar z_i^{\scene}
-\bar z_j^{\scene}
\right\|_2
-
\left\|
\bar z_i^{\scene}
-\bar z_k^{\scene}
\right\|_2
\Bigr]_+^{2}.
\end{aligned}
\label{eq:L_rank}
\end{equation}

Here $\mathcal T^{\scene}$ contains triplets satisfying
\[
d_{\Sigma,\mathrm{scene}}(s_i,s_j)
+
\epsilon_{\mathrm{gap}}
<
d_{\Sigma,\mathrm{scene}}(s_i,s_k),
\]
where $\epsilon_{\mathrm{gap}}>0$ excludes near-tied distinguishability comparisons and is separate from the equivalence and distinguishability thresholds used for pair supervision. The objective preserves reliable scene-distinguishability ordering without requiring metric calibration to $d_{\Sigma,\mathrm{scene}}$.

Finally, nuisance-only contraction supervision is applied to nuisance-only pairs $(i,j)\in\mathcal P_{\nui}$:
\begin{equation}
\mathcal L_{\inv}
=
\frac{1}{|\mathcal P_{\nui}|}
\sum_{(i,j)\in\mathcal P_{\nui}}
\left\|
\bar z_i^{\scene}
-
\bar z_j^{\scene}
\right\|_2^2 .
\label{eq:L_inv}
\end{equation}
This objective suppresses residual nuisance leakage that may remain after nuisance canonicalization due to imperfect canonicalization or finite-sample estimation, while preserving sensing-supported scene distinctions by acting only on nuisance-only pairs.

\subsection{Auxiliary Scene Output Supervision}
\label{subsec:auxiliary_scene_output_supervision}

Beyond the quotient-geometry objectives, OQ-TSAE employs auxiliary supervision on observable scene quantities to maintain compatibility with downstream scene interpretation.

The reflectivity-field objective supervises prediction of the canonicalized scene reflectivity field:
\begin{equation}
\mathcal{L}_{\rho}
=
\mathcal{L}_{\mathrm{mag}}
+
\lambda_{\mathrm{occ}}
\mathcal{L}_{\mathrm{occ}}
+
\lambda_{\mathrm{bg}}
\mathcal{L}_{\mathrm{bg}}
+
\lambda_{\mathrm{phase}}
\mathcal{L}_{\mathrm{phase}}
+
\lambda_{\mathrm{sp}}
\mathcal{L}_{\mathrm{sp}},
\label{eq:L_rho}
\end{equation}
where
$\mathcal{L}_{\mathrm{mag}}$
encourages reflectivity-magnitude consistency,
$\mathcal{L}_{\mathrm{occ}}$
emphasizes occupied spatial cells,
$\mathcal{L}_{\mathrm{bg}}$
penalizes background reconstruction error,
$\mathcal{L}_{\mathrm{phase}}$
encourages phase consistency, and
$\mathcal{L}_{\mathrm{sp}}$
penalizes diffuse activations in unoccupied regions. Together, these terms encourage sparse and localized scene-field reconstruction.

The occupancy-logit head is trained using
\begin{equation}
\mathcal{L}_{\mathrm{occ\text{-}logit}}
=
\mathcal{L}_{\mathrm{BCE}}
+
\lambda_{\mathrm{dice}}
\mathcal{L}_{\mathrm{dice}},
\label{eq:L_occ_logit}
\end{equation}
which provides supervision for sparse occupancy prediction over the observable spatial grid.

When motion supervision is available, the motion-field decoder is trained using
\begin{equation}
\mathcal{L}_{v}
=
\|
\widehat{\boldsymbol{\upsilon}}_t^{\res}
-
\boldsymbol{\upsilon}_t^{\res}
\|_2^2,
\label{eq:L_v}
\end{equation}
providing an auxiliary signal for observable motion recovery.

Primitive-level supervision is applied through
\begin{equation}
\mathcal{L}_{\mathrm{prim}}
=
\mathcal{L}_{\mathrm{presence}}
+
\lambda_{\mathrm{pos}}
\mathcal{L}_{\mathrm{pos}}
+
\lambda_{\mathrm{refl}}
\mathcal{L}_{\mathrm{refl}},
\label{eq:L_prim}
\end{equation}
where
$\mathcal{L}_{\mathrm{presence}}$
supervises object presence,
$\mathcal{L}_{\mathrm{pos}}$
penalizes matched position error, and
$\mathcal{L}_{\mathrm{refl}}$
supervises primitive reflectivity attributes. A primitive-rendered reflectivity loss is implemented as an optional auxiliary regularizer but is assigned zero weight in the default experimental configuration.

These objectives do not define the quotient geometry. Their role is to maintain compatibility with observable scene decoding, occupancy localization, motion estimation, and primitive-level scene interpretation.

\subsection{Unified Training Objective}
\label{subsec:unified_training_objective}

The complete training objective combines reconstruction compatibility, quotient geometry, and auxiliary scene-output supervision:

\begin{equation}
\begin{aligned}
\mathcal{L}
=
&
\;
\lambda_{\raw}\mathcal{L}_{\raw}
+
\lambda_{\can}\mathcal{L}_{\can}
+
\lambda_{\mathrm{mask}}
\mathcal{L}_{\mathrm{mask}}
\\
&
+
\lambda_{\eq}\mathcal{L}_{\eq}
+
\lambda_{\dist}\mathcal{L}_{\dist}
+
\lambda_{\mathrm{rank}}
\mathcal{L}_{\mathrm{rank}}
+
\lambda_{\inv}\mathcal{L}_{\inv}
\\
&
+
\lambda_{\rho}\mathcal{L}_{\rho}
+
\lambda_{\mathrm{occ\text{-}logit}}
\mathcal{L}_{\mathrm{occ\text{-}logit}}
+
\lambda_{v}\mathcal{L}_{v}
+
\lambda_{\mathrm{prim}}
\mathcal{L}_{\mathrm{prim}}
\\
&
+
\lambda_{\mathrm{prim\text{-}render}}
\mathcal{L}_{\mathrm{prim\text{-}render}},
\end{aligned}
\label{eq:full_loss}
\end{equation}
where the loss weights $\lambda_{\cdot}$ balance the different objective components. The primitive-rendered reflectivity term is included for completeness but assigned zero weight in the default experimental configuration.

The losses act on different components of the factorized architecture. The quotient-geometry objectives
(
$\mathcal{L}_{\eq}$,
$\mathcal{L}_{\dist}$,
$\mathcal{L}_{\mathrm{rank}}$,
$\mathcal{L}_{\inv}$
)
operate on the normalized scene latent
$\bar z^{\scene}$ and directly shape the quotient geometry.

By contrast, observation reconstruction losses operate on the tensor-core pathways.
The canonicalized reconstruction objective
$\mathcal{L}_{\can}$
is applied to the scene reconstruction component decoded from
$\cG^{\scene}$,
whereas the raw reconstruction objective
$\mathcal{L}_{\raw}$
is applied to the combined scene and nuisance reconstruction decoded from
$\cG^{\scene}$
and
$\cG^{\nui}$.

The auxiliary scene-output losses
(
$\mathcal{L}_{\rho}$,
$\mathcal{L}_{\mathrm{occ\text{-}logit}}$,
$\mathcal{L}_{v}$,
$\mathcal{L}_{\mathrm{prim}}$
)
provide compatibility supervision for observable scene decoding.

Overall, quotient consistency serves as the primary representation objective, while reconstruction and auxiliary scene-output losses maintain compatibility with observation recovery and scene interpretation.

\subsection{Diagnostic Metrics}
\label{subsec:diagnostic_metrics}

The following diagnostics provide empirical measurements of the quotient-consistency properties analyzed in Section~\ref{sec:theoretical_analysis}. Together, they evaluate whether the learned scene latent contracts quotient-equivalent scenes, preserves scene-distinguishable differences, suppresses nuisance variation, and maintains distinguishability ordering.

The False Distinction Rate (FDR) measures the fraction of reliably equivalent scene pairs whose latent codes are incorrectly separated beyond a threshold $\delta$:

\begin{equation}
\operatorname{FDR}(\delta)
=
\frac{
\left|
\{
(i,j)\in\mathcal P_{\eq}^{\scene}
:
\|
\bar z_i^{\scene}
-
\bar z_j^{\scene}
\|_2
>
\delta
\}
\right|
}{
|
\mathcal P_{\eq}^{\scene}
|
}.
\label{eq:fdr}
\end{equation}

Low FDR indicates successful contraction of quotient-equivalent scene pairs.

The False Merge Rate (FMR) measures the fraction of reliably distinguishable scene pairs whose latent codes are incorrectly collapsed within the same threshold:

\begin{equation}
\operatorname{FMR}(\delta)
=
\frac{
\left|
\{
(i,j)\in\mathcal P_{\dist}^{\scene}
:
\|
\bar z_i^{\scene}
-
\bar z_j^{\scene}
\|_2
\le
\delta
\}
\right|
}{
|
\mathcal P_{\dist}^{\scene}
|
}.
\label{eq:fmr}
\end{equation}

Low FMR indicates successful preservation of scene-distinguishable pairs.

The Nuisance Sensitivity Rate (NSR) is defined on nuisance-only scene pairs sharing the same scene configuration but differing in nuisance realization:

\begin{equation}
\operatorname{NSR}(\delta)
=
\frac{
\left|
\{
(i,j)\in\mathcal P_{\nui}
:
\|
\bar z_i^{\scene}
-
\bar z_j^{\scene}
\|_2
>
\delta
\}
\right|
}{
|
\mathcal P_{\nui}
|
}.
\label{eq:nsr}
\end{equation}

Low NSR indicates effective suppression of nuisance-dependent variation in the scene latent.

Finally, latent ordering consistency is evaluated using a rank consistency score that measures whether the learned geometry preserves pairwise distinguishability ordering induced by the scene-relevant metric. Let

\[
(i,j,k)
\]

denote a scene triplet satisfying

\[
d_{\Sigma,\mathrm{scene}}(s_i,s_j)
+
\epsilon_{\mathrm{gap}}
<
d_{\Sigma,\mathrm{scene}}(s_i,s_k),
\]

where $\epsilon_{\mathrm{gap}}$ excludes near-tied distinguishability comparisons. The rank consistency score is defined as

\begin{equation}
\begin{aligned}
\operatorname{Rank}
=
\frac{1}{|\mathcal T^{\scene}|}
\left|
\left\{
(i,j,k)\in\mathcal T^{\scene} :
\right.\right.
\\[-1mm]
\left.\left.
\|\bar z_i^{\scene}-\bar z_j^{\scene}\|_2
<
\|\bar z_i^{\scene}-\bar z_k^{\scene}\|_2
\right\}
\right|,
\end{aligned}
\label{eq:rank}
\end{equation}

High rank consistency indicates that the learned geometry preserves the coarse ordering induced by scene-relevant distinguishability without requiring exact metric calibration.

These metrics should be interpreted jointly because each captures a distinct failure mode. Low FDR with high NSR indicates nuisance leakage, whereas low FDR with high FMR indicates excessive latent collapse. Conversely, high FDR with low FMR indicates over-separation, often driven by nuisance-sensitive variation rather than scene-relevant distinctions. A quotient-consistent representation should therefore achieve low FDR, low FMR, low NSR, and high rank consistency. The mixed scene-pair families introduced in Section~\ref{subsec:scene_pair_taxonomy} provide an additional stress test distinguishing genuine scene--nuisance factorization from trivial invariance strategies.

Together, these diagnostics operationalize representation correctness by measuring false distinctions, false merges, nuisance leakage, and distinguishability-order preservation.

\section{Controlled Sensor-Conditioned Benchmark}
\label{sec:controlled_sensor_conditioned_benchmark}

\subsection{Benchmark Design Principles}
\label{subsec:benchmark_design_principles}

The benchmark is designed to evaluate representation correctness under structured sensing, namely whether a learned representation preserves sensing-supported scene distinctions while suppressing nuisance-driven and sensor-indistinguishable variation. Throughout the benchmark, scene-relevant labels are derived from distinguishability computed after nuisance canonicalization, so that representation correctness is evaluated with respect to scene structure rather than hardware state. Unlike conventional perception benchmarks that emphasize reconstruction or localization accuracy, the present benchmark evaluates whether latent geometry remains consistent with scene-relevant distinguishability.

Controlled simulation is used as a mechanism-analysis protocol rather than as a deployment benchmark. The objective is not to reproduce the full complexity of real sensing systems, but to isolate the factors relevant to quotient-consistent representation learning, including scene perturbation, nuisance perturbation, sensing-supported distinguishability, and nuisance canonicalization. Such factors are difficult to disentangle in unconstrained sensing datasets, where scene change, calibration drift, clutter, multipath, and annotation uncertainty are often entangled. Accordingly, the benchmark combines controlled scene-pair families, sensor-compatible observable supervision, and sensing-dependent distinguishability analysis to evaluate whether latent geometry tracks scene-relevant distinguishability rather than raw measurement variation or physical displacement alone~\cite{oberkampf2010verification}.
The resulting benchmark serves as the primary mechanism-analysis platform throughout the paper. A complementary real-radar utility validation on the CARRADA dataset~\cite{ouaknine2021carrada} is presented later to assess downstream utility beyond the controlled benchmark setting.

\subsection{Primary Sensing Platform}
\label{subsec:primary_sensing_platform}

We adopt a sensor-conditioned tensor sensing simulator as the primary evaluation platform because it provides controllable sensing diversity, resolution, and nuisance variation under an analytically tractable forward model. The simulator serves as a controlled benchmark instantiation of the proposed framework rather than as a model of any specific sensing modality.

The simulator models a structured sensing process that generates observations over controllable sensing, response, and range dimensions. Controllable sensing dimensions include sensing-state selection
$m \in \{1,\ldots,M\}$
and response-mode selection
$r \in \{1,\ldots,R\}$,
with observations represented over
$Q$
range bins. The forward model follows Eq.~\eqref{eq:forward_model}, with platform-specific response functions determined by the sensing configuration.

Unless otherwise specified, experiments use the primary sensing configuration with
$M = 16$ sensing states,
$R = 4$ response modes,
$Q = 32$ range bins,
signal-to-noise ratio of $20$ dB,
and temporal observation length
$T = 8$.

\subsection{Scene-Relevant Quotient Supervision}
\label{subsec:scene_relevant_quotient_supervision}

Observable scene targets are defined at a sensor-compatible resolution so that supervision remains consistent with the sensing capability of the platform. The observable reflectivity field is represented on a discretized spatial grid that provides stable reconstruction supervision while remaining computationally tractable. Motion supervision, when included, is defined over the same observable spatial grid and temporal observation horizon. These observable targets serve as auxiliary supervision for scene interpretation rather than defining the quotient geometry itself.

The quotient relation is defined through the scene-relevant distinguishability metric $d_{\Sigma,\mathrm{scene}}$ computed after nuisance canonicalization, as formalized in Sections~\ref{subsec:scene_relevant_observation_quotient} and~\ref{subsec:scene_relevant_distinguishability_metric}. In the present study, experiments employ an oracle supervision regime in which $d_{\Sigma,\mathrm{scene}}$ is computed directly from the simulator forward model after nuisance canonicalization. This design isolates representation behavior from quotient-estimation error and should therefore be interpreted as a mechanism-identification setting rather than a deployment assumption. The effects of supervision quality, supervision mismatch, and quotient-relation perturbations are examined empirically in Section~\ref{subsec:mechanistic_validation}.

\subsection{Scene-Pair Taxonomy}
\label{subsec:scene_pair_taxonomy}

Representation correctness is evaluated through a taxonomy of scene-pair perturbations spanning geometry, reflectivity, nuisance, and mixed scene--nuisance variation. The taxonomy operationalizes the merge-versus-separate decisions implied by the scene-relevant observation quotient by distinguishing perturbations that remain below the scene-relevant distinguishability threshold from those that become reliably distinguishable under the sensing process.

The resulting benchmark evaluates whether a representation contracts quotient-equivalent variation while preserving sensing-supported scene distinctions and suppressing nuisance-induced observation differences. Motion supervision is used only for auxiliary scene decoding and is therefore not treated as a separate perturbation family. The complete scene-pair taxonomy and diagnostic targets are provided in Appendix~\ref{app:scene_pair_taxonomy}.

\subsection{Controlled Benchmark Protocol}
\label{sec:controlled_benchmark_protocol}

The benchmark constructs controlled training and evaluation pairs using the scene-pair taxonomy described above. Each family is generated by applying a specified scene or nuisance perturbation to a base scene and assigning the corresponding merge-or-separate target according to $d_{\Sigma,\mathrm{scene}}$. The resulting protocol directly evaluates quotient-consistent representation behavior through the representation-correctness diagnostics introduced in Section~\ref{subsec:diagnostic_metrics}.

In addition to the scene-pair families, the benchmark includes a scene-level distinguishability protocol that examines how sensing capability influences representation resolution. Single-target scene pairs with controlled spatial separation $\Delta p$ are evaluated under sensing configurations of varying capability. For each configuration, we compute both $d_{\Sigma,\mathrm{scene}}$ and the corresponding latent distance
$\norm{\bar{z}_i^{\scene} - \bar{z}_j^{\scene}}$.
A companion analysis additionally evaluates raw measurement distinguishability for comparison. Together, these protocols assess whether latent organization follows scene-relevant distinguishability rather than raw measurement variation alone.

\subsection{Evaluation Diagnostics}
\label{subsec:evaluation_diagnostics}

Representation correctness is evaluated using the four quotient-oriented diagnostics introduced in Section~\ref{subsec:diagnostic_metrics}: FDR, FMR, NSR, and rank consistency. These diagnostics are reported jointly because no single metric is sufficient to characterize representation correctness. Together, they quantify the principal failure modes of quotient-consistent representation learning, including false distinctions, false merges, nuisance leakage, and loss of distinguishability structure.

The benchmark therefore provides a controlled setting for evaluating whether a learned representation preserves scene-relevant distinguishability while suppressing nuisance-driven variation. We next use this benchmark to examine representation correctness, latent mechanism behavior, supervision robustness, and sensing-dependent representation organization.

\section{Experimental Evaluation}
\label{sec:experimental_evaluation}
\subsection{Experimental Configuration}
\label{subsec:experimental_configuration}

The primary experiments were conducted on the controlled sensor-conditioned benchmark described in Section~\ref{sec:controlled_sensor_conditioned_benchmark}. Unless otherwise specified, all methods used the same sensing protocol, scene-pair construction procedure, optimization budget, and evaluation diagnostics. The primary sensing configuration defined in Section~\ref{subsec:primary_sensing_platform} was used throughout the controlled-benchmark experiments.

To complement the controlled benchmark, we additionally evaluate learned representations on the CARRADA real-radar dataset~\cite{ouaknine2021carrada}, as described in Section~\ref{subsec:real_radar_validation}. Because CARRADA does not provide oracle quotient supervision, the real-radar experiment serves as a downstream utility validation rather than a direct quotient-correctness evaluation.

For the controlled-benchmark experiments, the proposed OQ-TSAE employed Tucker factorization ranks of $R_T = 4$, $R_M = 8$, $R_R = 4$, and $R_Q = 8$ for both the scene and nuisance pathways, with latent dimensions of $64$ per pathway and hidden dimension $256$. Training used the scene-pair taxonomy introduced in Section~\ref{subsec:scene_pair_taxonomy}, with quotient supervision derived from the oracle scene-relevant distinguishability metric $d_{\Sigma,\mathrm{scene}}$. The oracle metric was used only for reliable supervision-pair construction and was not provided during inference.

Optimization used Adam with learning rate $10^{-4}$, batch size $64$, and $100$ training epochs. Training scenes were generated using the benchmark family-construction protocol with $50$ samples per scene-pair family. Unless otherwise specified, results on the controlled benchmark were averaged over five random seeds $\{101,202,303,404,505\}$. Seed-to-seed variability remained small and did not alter the qualitative ordering of the reported diagnostics. Architectural and optimization settings were fixed across all methods and sensing configurations.

We compared OQ-TSAE against a set of representation baselines designed to isolate three factors central to the proposed framework: quotient supervision, tensorized observation modeling, and scene--nuisance factorization. Where applicable, latent dimensionality and optimization settings were matched to OQ-TSAE to reduce confounding effects of model scale.

The baseline set consists of FlatMLP, TuckerAE, TuckerAE-Field, MetricReg-raw, OQ-TSAE$_{\mathrm{raw}}$, MetricLearning-scene, and ContrastiveScene. FlatMLP processes observations using a multilayer perceptron without tensor structure or quotient supervision. TuckerAE performs Tucker-structured reconstruction without quotient supervision or nuisance separation, while TuckerAE-Field additionally incorporates observable scene-field decoding. MetricReg-raw replaces quotient supervision with pairwise latent-distance regression toward raw measurement distinguishability. OQ-TSAE$_{\mathrm{raw}}$ preserves the same architecture as OQ-TSAE$_{\mathrm{scene}}$ but replaces scene-relevant distinguishability with raw measurement distinguishability for supervision-pair construction.

MetricLearning-scene and ContrastiveScene use the same scene-relevant supervision source as OQ-TSAE$_{\mathrm{scene}}$ while excluding tensorized observation modeling, scene--nuisance factorization, and auxiliary reconstruction. The metric-learning baseline optimizes a Euclidean triplet-margin objective, whereas the contrastive baseline employs an InfoNCE-style objective.

Detailed optimization hyperparameters, supervision thresholds, margins, and loss weights are reported in Appendix~\ref{app:hyperparameters}.

\subsection{Representation Correctness}
\label{subsec:representation_correctness}

This experiment evaluates representation correctness using the quotient-oriented diagnostics introduced in Section~\ref{subsec:diagnostic_metrics}, namely FDR, FMR, NSR, and Rank.

\begin{table}
\centering
\caption{
Representation-correctness diagnostics at $\delta=0.3$.
(a) Aggregate comparison across methods.
(b) Per-family decomposition for selected scene-aware methods.
}
\label{tab:correctness}

\small
\setlength{\tabcolsep}{4pt}
\textbf{(a) Aggregate diagnostics}

\vspace{0.5mm}
\begin{tabular}{lcccc}
\toprule
\textbf{Method}
& \textbf{FDR}$\downarrow$
& \textbf{FMR}$\downarrow$
& \textbf{NSR}$\downarrow$
& \textbf{Rank}$\uparrow$ \\
\midrule

FlatMLP
& 0.166
& 0.001
& 0.998
& 0.634 \\

TuckerAE
& \textbf{0.000}
& 0.822
& \textbf{0.100}
& 0.631 \\

TuckerAE-Field
& 0.026
& 0.069
& 0.582
& \underline{0.753} \\

MetricReg-raw
& 0.076
& 0.007
& 0.976
& 0.651 \\
\midrule

MetricLearning-scene
& \underline{0.002}
& \textbf{0.000}
& 1.000
& 0.654 \\

ContrastiveScene
& \textbf{0.000}
& \underline{0.002}
& 0.986
& 0.659 \\

OQ-TSAE$_{\mathrm{raw}}$
& 0.020
& 0.004
& 0.994
& 0.717 \\
\midrule

\textbf{OQ-TSAE$_{\mathrm{scene}}$}
& 0.008
& \textbf{0.000}
& \underline{0.106}
& \textbf{0.838} \\

\bottomrule
\end{tabular}

\vspace{2mm}

\footnotesize
\setlength{\tabcolsep}{3pt}
\textbf{(b) Per-family diagnostic decomposition}

\vspace{0.5mm}
\begin{tabular}{lccc}
\toprule
\textbf{Family}
& \textbf{TuckerAE-Field}
& \textbf{OQ-TSAE$_{\mathrm{raw}}$}
& \textbf{OQ-TSAE$_{\mathrm{scene}}$} \\
\midrule
G-Eq
& FDR: 0.052
& FDR: 0.040
& FDR: 0.016 \\

G-Dist
& FMR: 0.000
& FMR: 0.000
& FMR: 0.000 \\

R-Eq
& FDR: 0.000
& FDR: 0.000
& FDR: 0.000 \\

R-Dist
& FMR: 0.156
& FMR: 0.016
& FMR: 0.000 \\

N-Gain
& NSR: 0.448
& NSR: 0.988
& NSR: 0.104 \\

N-Phase
& NSR: 0.716
& NSR: 1.000
& NSR: 0.108 \\
\midrule
G+Gain
& FMR: 0.000
& FMR: 0.000
& FMR: 0.000 \\

R+Phase
& FMR: 0.120
& FMR: 0.000
& FMR: 0.000 \\
\bottomrule
\end{tabular}
\end{table}

Table~\ref{tab:correctness}(a) reports the aggregate representation-correctness diagnostics at the operating threshold $\delta=0.3$.

The baseline methods exhibit distinct representation failure patterns. FlatMLP and MetricReg-raw achieve low FMRs, indicating that many scene-distinguishable pairs remain separated. However, both methods exhibit NSR close to unity, suggesting that hardware-induced measurement variation remains strongly encoded in the learned representation.

TuckerAE exhibits a different failure mode. Although it achieves near-zero FDR and relatively low NSR, its FMR remains extremely high ($0.822$), indicating substantial collapse of scene-distinguishable pairs. TuckerAE-Field partially alleviates this issue through observable scene supervision, reducing false merges and improving rank consistency, but still retains considerable nuisance sensitivity.

MetricLearning-scene and ContrastiveScene provide controls for pairwise supervision alone. Despite optimizing different objectives (triplet-margin versus InfoNCE), both methods achieve near-zero FDR and FMR while retaining nearly maximal nuisance sensitivity (NSR $\approx 1$). Their similar behavior suggests that the choice of pairwise objective is not the primary factor governing representation correctness. Although both methods separate scene-distinguishable pairs, nuisance-visible variation remains strongly encoded in the latent geometry.

Among the evaluated methods, OQ-TSAE$_{\mathrm{scene}}$ exhibits the strongest overall representation-correctness profile under the evaluated setting. It simultaneously achieves low FDR, near-zero FMR, low NSR, and the highest rank consistency. These results are consistent with the interpretation that the learned scene latent contracts quotient-equivalent scene pairs, preserves reliably distinguishable scene differences, suppresses nuisance-driven variation, and maintains coherent latent ordering.

\paragraph{Why Scene-Relevant Quotient Supervision Matters.}

To isolate the contribution of quotient supervision, we compare OQ-TSAE$_{\mathrm{scene}}$ and OQ-TSAE$_{\mathrm{raw}}$, which share the same architecture, optimization procedure, and latent organization while differing only in the supervision signal used to construct reliable scene pairs.

Table~\ref{tab:correctness}(a) shows that the supervision signal substantially affects representation behavior. Although both variants achieve low FDR and FMR, OQ-TSAE$_{\mathrm{raw}}$ exhibits NSR close to unity, indicating substantial nuisance leakage into the learned scene latent. In contrast, OQ-TSAE$_{\mathrm{scene}}$ substantially reduces NSR while improving rank consistency, suggesting stronger alignment between latent geometry and scene-relevant distinguishability.

This difference is consistent with how reliable scene pairs are constructed. Under raw distinguishability supervision, nuisance-visible perturbations directly contribute to pair separation because any measurement-visible change may be treated as evidence of scene difference. In contrast, scene-relevant quotient supervision is constructed after nuisance canonicalization and therefore contracts nuisance-equivalent observations while preserving separation for scene-distinguishable variations.

Taken together, the comparisons with MetricLearning-scene, ContrastiveScene, and OQ-TSAE$_{\mathrm{raw}}$ suggest that neither pairwise supervision, contrastive supervision, nor architectural factorization alone is sufficient for quotient-consistent representation correctness. The results instead support supervision signals that explicitly reflect scene-relevant distinguishability together with representation structures capable of separating scene-relevant and nuisance-dependent variation.

\paragraph{Per-Family Diagnostic Decomposition.}

Table~\ref{tab:correctness}(b) provides a per-family decomposition of the aggregate diagnostics, allowing geometry, reflectivity, nuisance, and mixed perturbation effects to be examined separately.

The nuisance families reveal the clearest distinction between raw and scene-relevant quotient supervision. For both N-Gain and N-Phase, OQ-TSAE$_{\mathrm{scene}}$ maintains NSR near $0.1$, indicating limited encoding of nuisance-driven measurement variation in the scene latent. In contrast, OQ-TSAE$_{\mathrm{raw}}$ exhibits NSR close to unity, suggesting that nuisance-visible measurement variation remains strongly reflected in latent organization when reliable scene pairs are constructed from raw distinguishability. TuckerAE-Field partially suppresses nuisance variation but remains substantially more sensitive, particularly for phase perturbations. These observations support the interpretation that architectural factorization alone may be insufficient unless the supervision signal is aligned with scene-relevant distinguishability.

The geometry and reflectivity families further indicate that nuisance suppression is not achieved through indiscriminate latent contraction. On the geometry-equivalent family (G-Eq), OQ-TSAE$_{\mathrm{scene}}$ achieves the lowest FDR, consistent with improved contraction of quotient-equivalent scene perturbations. More importantly, on reliably distinguishable reflectivity perturbations (R-Dist), OQ-TSAE$_{\mathrm{scene}}$ maintains zero FMR, whereas TuckerAE-Field continues to collapse a nontrivial fraction of distinguishable scene pairs.

The mixed scene–nuisance perturbation families provide an informative stress test because scene discrimination must be preserved in the presence of nuisance variation. OQ-TSAE$_{\mathrm{scene}}$ maintains zero FMR on both G+Gain and R+Phase families, suggesting that nuisance suppression remains compatible with preserving distinguishable scene structure when scene and nuisance perturbations co-occur.

\subsection{Threshold Sensitivity Analysis}
\label{subsec:threshold_sensitivity}
Two thresholds influence the evaluation pipeline: the supervision-threshold ratio
$\epsilon_{\mathrm{dist}}/\epsilon_{\mathrm{eq}}$,
which controls reliable-pair construction during training,
and the diagnostic threshold $\delta$,
which determines the computation of representation-correctness metrics.
Sensitivity to both quantities is evaluated while keeping all other settings fixed.

\begin{figure}
\centering
\includegraphics[width=\columnwidth]
{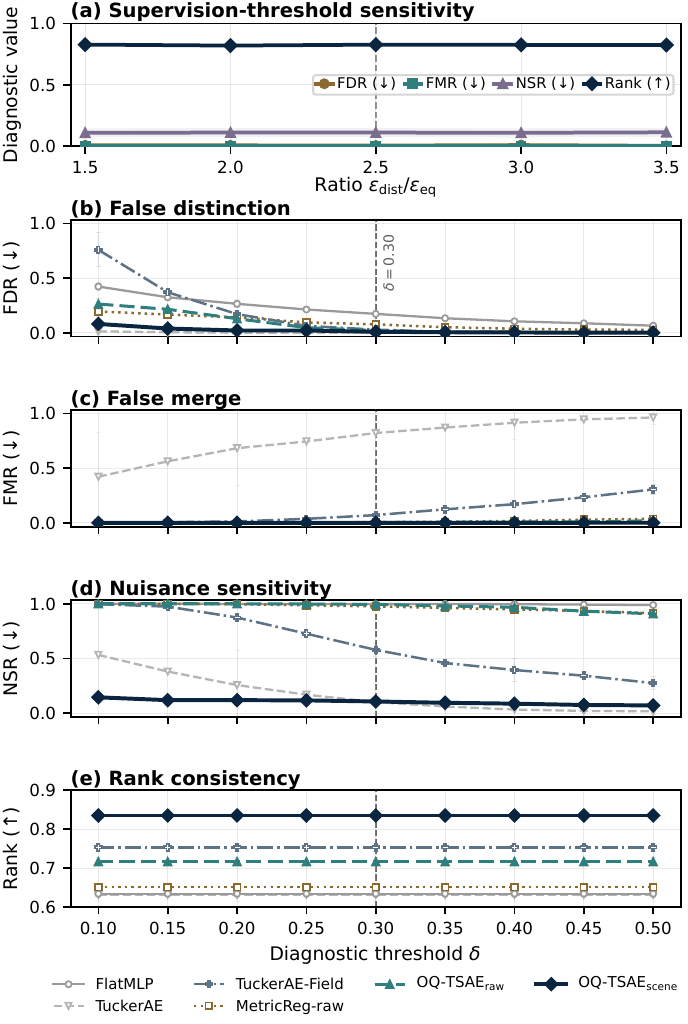}
\caption{
Threshold sensitivity analysis of representation correctness.
(a) Sensitivity to the supervision-threshold ratio
$\epsilon_{\mathrm{dist}}/\epsilon_{\mathrm{eq}}$.
(b)--(e) Sensitivity of false distinction (FDR), false merge (FMR),
nuisance sensitivity (NSR), and rank consistency to the diagnostic
threshold $\delta$.
The dashed lines indicate the operating configurations used throughout
the paper.
}
\label{fig:threshold_sensitivity}
\end{figure}

Figure~\ref{fig:threshold_sensitivity}(a) examines the supervision-threshold ratio.
Across the evaluated range, FDR, FMR, NSR, and rank consistency remain nearly unchanged, indicating that the representation-correctness behavior does not depend on a narrowly selected supervision configuration. Quotient-consistent scene organization therefore remains stable under moderate variation of the reliable-pair construction criterion.

Figure~\ref{fig:threshold_sensitivity}(b)--(e) evaluates diagnostic-threshold sensitivity over
$\delta \in [0.10,0.50]$.
OQ-TSAE$_{\mathrm{scene}}$ maintains low FDR, near-zero FMR, low NSR, and stable rank consistency throughout the evaluated range, indicating that the operating value $\delta=0.30$ lies within a stable diagnostic regime rather than at an isolated favorable point.

The threshold sweep also preserves the qualitative ordering observed in Table~\ref{tab:correctness}(a). Methods based on raw measurement distinguishability remain highly nuisance-sensitive across thresholds, TuckerAE continues to exhibit severe false merging, and TuckerAE-Field remains less nuisance-suppressed and less rank-consistent than OQ-TSAE$_{\mathrm{scene}}$. These results indicate that the principal correctness conclusions are robust to threshold selection and reflect persistent differences in latent organization.

\subsection{Mechanistic Validation of Quotient-Consistent Learning}
\label{subsec:mechanistic_validation}

This experiment investigates the mechanisms underlying quotient-consistent representation organization. Specifically, we evaluate pathway-selective sensitivity, supervision robustness, quotient-relation perturbations, and controlled component ablations. Together, these analyses examine whether scene-relevant variation is preferentially organized within the scene pathway and identify the model components most responsible for representation correctness.

\begin{figure*}
\centering
\includegraphics[width=0.92\textwidth]
{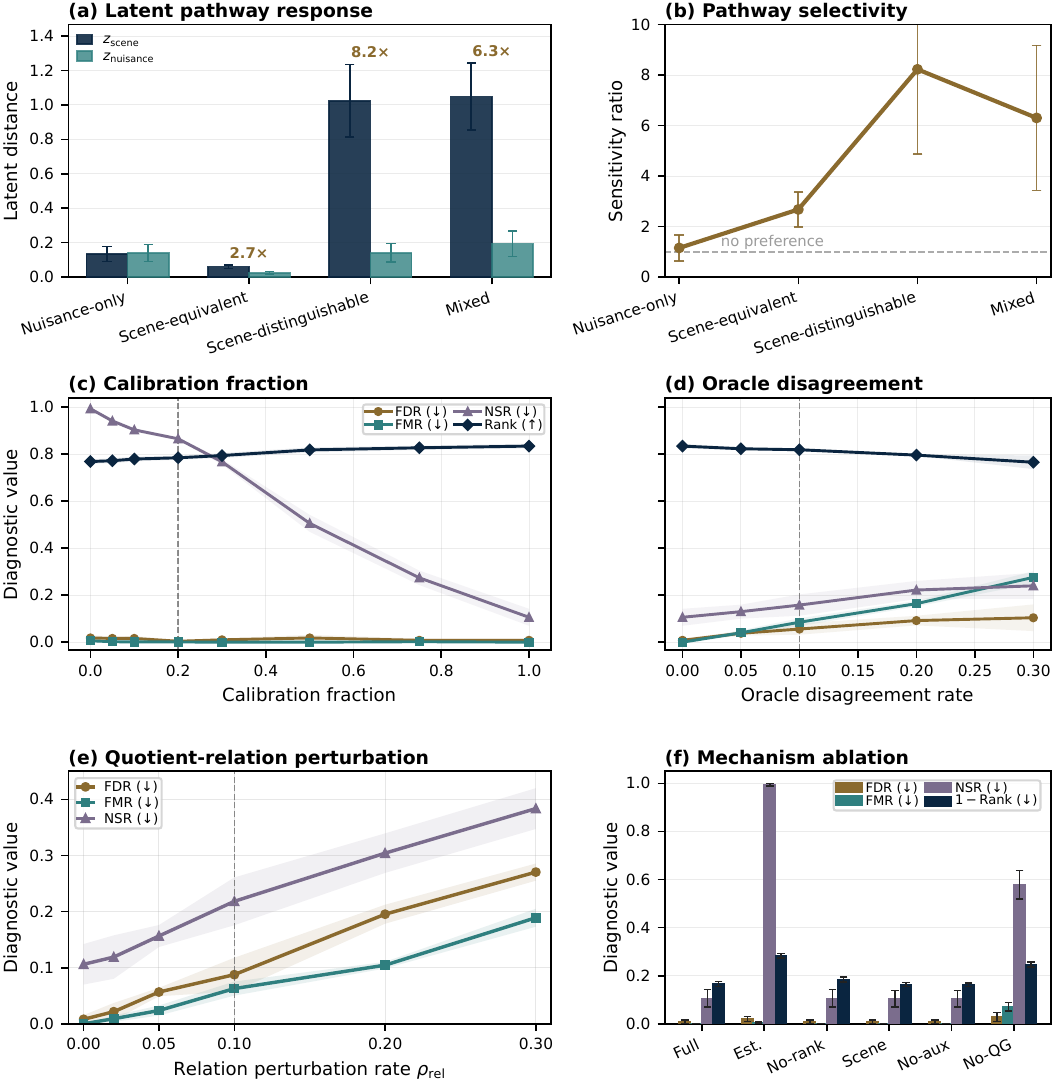}
\caption{
Mechanistic validation of quotient-consistent representation learning.
(a) Latent response of the scene and nuisance pathways across perturbation families.
(b) Scene-to-nuisance pathway selectivity ratio.
(c) Calibration-fraction sensitivity.
(d) Oracle-disagreement sensitivity.
(e) Quotient relation-assignment perturbation sensitivity.
(f) Component ablation analysis.
Dashed lines indicate the operating configurations used throughout the paper.
Rank consistency in (f) is reported as $1-\mathrm{Rank}$, where lower values indicate better latent ordering.
}
\label{fig:mechanistic_validation}
\end{figure*}

Figure~\ref{fig:mechanistic_validation}(a,b) evaluates pathway-selective sensitivity. Nuisance-only perturbations produce comparatively weak scene-pathway responses, whereas scene-distinguishable and mixed perturbations generate substantially larger scene-pathway responses together with elevated scene-to-nuisance selectivity ratios. This behavior is consistent with quotient-consistent organization, where sensing-supported scene distinctions preferentially expand the scene pathway while nuisance variation is largely confined to the nuisance pathway.

Figure~\ref{fig:mechanistic_validation}(c,d) examines robustness to imperfect supervision. Increasing calibration fraction reduces nuisance sensitivity and improves rank consistency, whereas progressively increasing oracle disagreement produces gradual degradation in FDR, FMR, and NSR while leaving latent ordering comparatively stable. These observations indicate that quotient-consistent organization benefits from improved supervision quality but remains relatively robust under moderate supervision mismatch.

Figure~\ref{fig:mechanistic_validation}(e) reports representation diagnostics under progressively increasing quotient relation-assignment perturbation rates. FDR, FMR, and NSR deteriorate in a gradual and largely monotonic manner as perturbation increases, indicating that quotient-consistent organization benefits from reliable quotient relations while remaining relatively robust to moderate relation-assignment errors.

Figure~\ref{fig:mechanistic_validation}(f) reveals a differentiated contribution pattern among model components. Replacing quotient-aligned supervision with raw-distance-derived relation supervision produces the largest degradation, substantially increasing nuisance sensitivity and ordering error. Removing quotient geometry yields the second largest deterioration, particularly in FMR and NSR. By contrast, removing rank consistency primarily affects latent ordering, while auxiliary reconstruction and explicit nuisance-pathway modeling exert comparatively limited influence on the primary correctness diagnostics.

Taken together, these results indicate that representation correctness is driven primarily by supervision aligned to scene-relevant equivalence together with quotient geometry, while pathway separation contributes additional robustness by isolating nuisance variation. The observed behavior is qualitatively consistent with the theoretical perspective developed in Section~\ref{sec:theoretical_analysis}, which identifies quotient-consistent organization as a central determinant of representation correctness.
\subsection{Practical Compatibility and Computational Footprint}
\label{subsec:practical_compatibility}
An important practical question is whether quotient-consistent representation learning remains compatible with scene interpretation and observation recovery, and whether the observed correctness gains arise primarily from increased computational scale. Table~\ref{tab:practical_utility} reports decoding performance, reconstruction quality, and computational footprint under identical implementation settings.

\begin{table*}
\centering
\caption{
Practical utility analysis.
Decoding compatibility, reconstruction performance, and computational footprint comparison.
A dash indicates that the metric is structurally inapplicable or that the corresponding output is not actively trained for the evaluated method.
Forward latency is measured per inference pass and excludes offline pair construction and oracle distinguishability computation performed prior to training.
}
\label{tab:practical_utility}
\setlength{\tabcolsep}{3pt}
\begin{tabular}{lccc|cc|ccc}
\toprule
&
\multicolumn{3}{c|}{\textbf{Decoding}}
&
\multicolumn{2}{c|}{\textbf{Reconstruction}}
&
\multicolumn{3}{c}{\textbf{Footprint}}
\\
\cmidrule(lr){2-4}
\cmidrule(lr){5-6}
\cmidrule(l){7-9}

\textbf{Method}
& \textbf{NMSE}$\downarrow$
& \textbf{F1}$\uparrow$
& \textbf{RMSE}$\downarrow$
& \textbf{Raw}$\downarrow$
& \textbf{Can.}$\downarrow$
& \textbf{Params (M)}
& \textbf{Size (MB)}
& \textbf{Forward (ms)}$\downarrow$
\\

\midrule

FlatMLP
& --
& --
& --
& --
& --
& 16.84
& 64.25
& \underline{0.20$\pm$0.00}
\\

MetricReg-raw
& --
& --
& --
& --
& --
& 16.84
& 64.25
& \textbf{0.19$\pm$0.00}
\\

TuckerAE
& --
& --
& --
& \textbf{0.057}
& \textbf{0.548}
& 2.48
& 9.44
& 1.93$\pm$0.02
\\

TuckerAE-Field
& \textbf{3.194}
& \textbf{0.151}
& \underline{0.204}
& 0.107
& 0.575
& 2.48
& 9.44
& 1.84$\pm$0.01
\\

OQ-TSAE$_{\mathrm{raw}}$
& \underline{3.298}
& \underline{0.147}
& \textbf{0.203}
& \underline{0.099}
& \underline{0.573}
& 2.48
& 9.44
& 1.83$\pm$0.01
\\

\textbf{OQ-TSAE$_{\mathrm{scene}}$}
& 3.300
& 0.142
& 0.242
& 0.124
& 0.581
& 2.48
& 9.44
& 1.84$\pm$0.01
\\

\bottomrule
\end{tabular}
\end{table*}

Although OQ-TSAE$_{\mathrm{scene}}$ is not optimized primarily for decoding or reconstruction, it retains usable decoding and reconstruction performance while substantially improving representation correctness. These results indicate that quotient-consistent scene organization remains compatible with observation recovery and observable scene interpretation.

Table~\ref{tab:practical_utility} also shows that these improvements are not explained by increased model capacity or computational scale. OQ-TSAE$_{\mathrm{scene}}$ shares the same Tucker-structured architecture scale as TuckerAE, TuckerAE-Field, and OQ-TSAE$_{\mathrm{raw}}$, resulting in identical parameter count, model size, and nearly identical inference latency. The representation-correctness gains therefore cannot be attributed to increased model capacity or computational footprint.

\subsection{Scene-Level Distinguishability}
\label{subsec:scene_level_distinguishability}
To examine whether latent organization adapts to sensing-dependent scene distinguishability, we compare scene-relevant distinguishability, latent response, and raw measurement distinguishability across sensing configurations of increasing capability.

\begin{figure*}
\centering
\includegraphics[width=0.98\textwidth]
{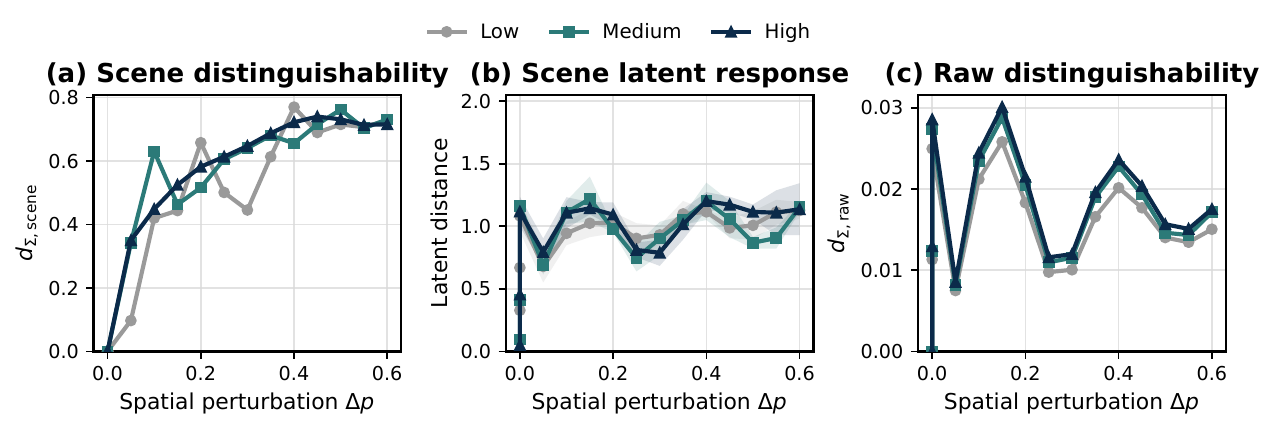}
\caption{
Sensing-dependent scene resolution analysis under sensing configurations of increasing capability.
(a) Scene-relevant distinguishability increases with perturbation magnitude and rises earlier under stronger sensing configurations.
(b) The learned scene latent exhibits capability-dependent separation behavior broadly consistent with scene-relevant distinguishability.
(c) Raw measurement distinguishability computed from uncanonicalized observations reflects additional hardware-dependent variation and serves as a negative control.
}
\label{fig:distinguishability_combined}
\end{figure*}

Figure~\ref{fig:distinguishability_combined}(a) shows scene-relevant distinguishability as a function of perturbation magnitude under sensing configurations of increasing capability: low ($M=8$, $Q=16$, SNR $=10$ dB), medium ($M=16$, $Q=32$, SNR $=20$ dB), and high ($M=32$, $Q=64$, SNR $=30$ dB). Scene-relevant distinguishability generally increases with perturbation magnitude and rises earlier under stronger sensing configurations. Small perturbations that remain weakly distinguishable under low-capability sensing become distinguishable at lower perturbation levels as sensing capability increases. This behavior is consistent with the quotient interpretation that improved sensing capability refines the scene-relevant observation quotient.

Figure~\ref{fig:distinguishability_combined}(b) shows a corresponding trend in latent space. Stronger sensing configurations induce earlier latent separation, whereas weaker configurations keep small perturbations comparatively close together. As perturbation magnitude increases, latent distances gradually approach the limits imposed by the normalized latent geometry. The observed latent organization is therefore broadly consistent with the scene-relevant distinguishability structure rather than with physical displacement alone.

Figure~\ref{fig:distinguishability_combined}(c) reports the distinguishability metric computed from uncanonicalized observations. Unlike $d_{\Sigma,\mathrm{scene}}$, the raw metric also reflects hardware-dependent and calibration-related variation removed through nuisance canonicalization. Consequently, raw distinguishability may attribute nuisance-induced measurement changes to scene differences and therefore provides a negative control for nuisance-sensitive supervision.

Overall, the results suggest that latent organization adapts to sensing-supported scene distinguishability after nuisance canonicalization rather than to physical scene variation or raw measurement variation alone. Changes in sensing capability therefore alter not only observation quality but also the scene distinctions that a quotient-consistent representation is expected to preserve.

\subsection{Real-Radar Utility Validation}
\label{subsec:real_radar_validation}

To complement the controlled sensing benchmark, we evaluate the learned representations on the CARRADA real-radar dataset~\cite{ouaknine2021carrada}, which provides synchronized automotive radar observations and object annotations collected in outdoor driving environments. Range--angle observations are used as encoder inputs. Encoders are frozen after representation learning and evaluated using identical class-balanced logistic probes for frame-level multi-label object-presence prediction. The evaluation uses 7,193 annotated range--angle frames partitioned into 4,319 training, 1,483 validation, and 1,391 test samples.

Because CARRADA does not provide quotient relations, canonical observations, nuisance annotations, or oracle scene-distinguishability targets, evaluation is necessarily based on downstream prediction rather than quotient-oriented correctness diagnostics. Consequently, the results should be interpreted as a downstream utility validation rather than a direct test of quotient correctness. Because quotient-specific supervision is unavailable, only the reconstruction component of OQ-TSAE is activated, yielding a reconstruction-only variant denoted by OQ-TSAE$_{\mathrm{raw}}$.

\begin{table}[t]
\centering
\scriptsize
\caption{
Downstream utility, seed-to-seed variability, and robustness under observation degradation on the CARRADA real-radar dataset.}
\label{tab:real_radar_utility}
\setlength{\tabcolsep}{1pt}

\begin{tabular}{lccccccc}
\toprule

Method
& Macro AP
& Std.$\downarrow$
& Ped.
& Cycl.
& Car
& \makecell{SNR\\Drop$\downarrow$}
& \makecell{Clutter\\Drop$\downarrow$}
\\

\midrule

\makecell[l]{MetricLearning-scene}
& \textbf{0.667}
& 0.044
& 0.905
& \textbf{0.180}
& 0.844
& 0.180
& 0.076
\\

FlatMLP
& \underline{0.661}
& 0.011
& \textbf{0.946}
& 0.120
& 0.953
& \textbf{0.004}
& \textbf{0.004}
\\

\makecell[l]{OQ-TSAE$_{\mathrm{raw}}$}
& 0.657
& 0.010
& 0.911
& 0.116
& \textbf{0.969}
& 0.032
& 0.018
\\

TuckerAE
& 0.656
& 0.012
& 0.891
& 0.095
& 0.946
& \underline{0.030}
& \underline{0.014}
\\

ContrastiveScene
& 0.622
& 0.005
& 0.892
& 0.140
& 0.851
& 0.232
& 0.186
\\

\bottomrule
\end{tabular}
\end{table}

Table~\ref{tab:real_radar_utility} reports downstream utility, seed stability, and robustness on CARRADA. Macro AP is averaged over five independent random seeds, and class-wise AP values are reported for the reference seed used in the real-radar experiments.

The results show that the strongest method depends on the evaluation criterion. MetricLearning-scene achieves the highest Macro AP and cyclist AP, but also has the largest seed variability among the competitive methods and degrades substantially under both stressors. FlatMLP attains the highest pedestrian AP and the smallest degradation under both additive noise and structured clutter. OQ-TSAE$_{\mathrm{raw}}$ achieves the highest car AP and a Macro AP within 0.010 of the strongest method, with variability comparable to FlatMLP and TuckerAE. These results indicate that no method dominates all criteria; instead, the methods occupy different positions in the utility--robustness tradeoff space.

Despite using only reconstruction-based learning and receiving no quotient-specific supervision, OQ-TSAE$_{\mathrm{raw}}$ remains competitive across clean utility, seed stability, and degradation robustness. This supports the use of the real-radar experiment as a downstream utility validation, while the quotient-correctness claims remain grounded in the controlled benchmark.

\begin{figure*}[t]
\centering
\includegraphics[width=\textwidth]
{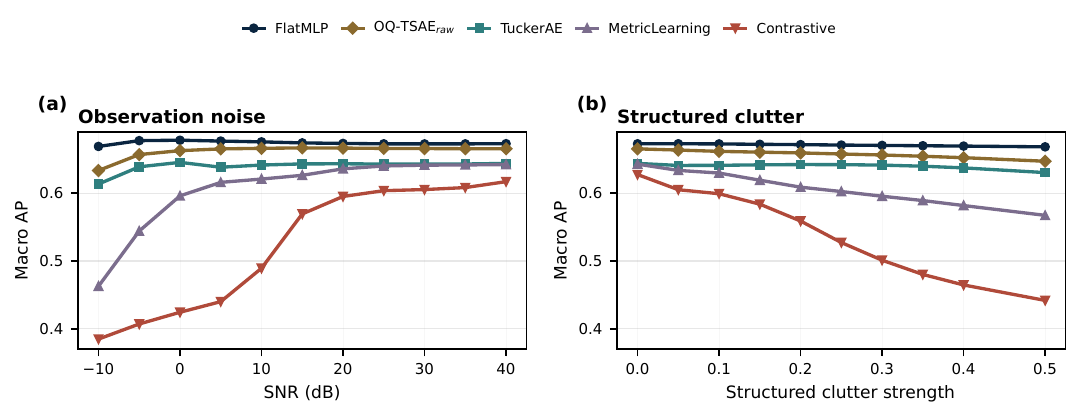}
\caption{
Downstream utility under real-radar observation degradation on the CARRADA dataset.
(a) Additive observation noise across varying test-time SNR levels.
(b) Spatially correlated structured clutter corruption.
Different representation objectives exhibit markedly different robustness profiles under both degradation mechanisms.
}
\label{fig:real_radar_robustness}
\end{figure*}

Figure~\ref{fig:real_radar_robustness} further shows the degradation profiles under two complementary observation stressors. Under additive observation noise (Fig.~\ref{fig:real_radar_robustness}a), performance generally improves as test-time SNR increases, but the rate of degradation differs markedly across objectives. FlatMLP is nearly invariant, OQ-TSAE$_{\mathrm{raw}}$ and TuckerAE degrade only modestly, whereas MetricLearning-scene and ContrastiveScene are substantially more sensitive under low-SNR observations.

Under structured clutter corruption (Fig.~\ref{fig:real_radar_robustness}b), the separation is even clearer. FlatMLP remains nearly invariant, OQ-TSAE$_{\mathrm{raw}}$ and TuckerAE retain relatively stable utility, while MetricLearning-scene and especially ContrastiveScene suffer much larger losses as clutter strength increases. These results indicate that representation objectives influence not only clean downstream utility but also robustness to realistic radar observation degradation.

Taken together, the results indicate that representation objectives influence not only downstream predictive utility but also robustness to observation-quality variation. Although OQ-TSAE$_{\mathrm{raw}}$ does not achieve the highest downstream utility, it consistently combines competitive accuracy, low seed-to-seed variability, and strong robustness under both additive and spatially correlated observation corruption. These findings suggest that the learned tensor representation retains useful task-relevant information under realistic radar observation degradations even without quotient-specific supervision.
\section{Discussion}
\label{sec:discussion}
\subsection{What the Results Establish}
\label{subsec:what_the_results_establish}

The experimental results support the central claim of this work under the controlled sensor-conditioned benchmark considered here: quotient-consistent representation learning can preserve scene-relevant distinctions while suppressing nuisance-driven measurement variation within a unified latent organization. Across the evaluated methods, this behavior is most evident in OQ-TSAE$_{\mathrm{scene}}$, whose diagnostic profile indicates that nuisance suppression need not be achieved through indiscriminate latent collapse. In particular, scene-distinguishable perturbations remain separated while quotient-equivalent variations are more consistently contracted.

The additional ContrastiveScene control and the component ablations further suggest that the observed correctness behavior cannot be attributed solely to a particular architecture or pairwise loss formulation. Instead, quotient-aligned supervision and quotient geometry emerge as the primary contributors under the evaluated setting, while auxiliary reconstruction, rank consistency, and explicit nuisance-pathway modeling appear comparatively secondary with respect to the primary correctness diagnostics.

The calibration and oracle-disagreement analyses indicate that quotient-consistent behavior remains qualitatively stable under moderate supervision mismatch. The complementary real-radar evaluation on CARRADA provides a distinct form of evidence. Because quotient relations, canonical observations, nuisance annotations, and oracle scene-distinguishability targets are unavailable in real radar data, the experiment cannot directly evaluate quotient correctness. Instead, it assesses whether the learned representations retain practical downstream utility, training stability, and robustness when quotient-aligned supervision cannot be instantiated. In this sense, the CARRADA results complement rather than substitute the controlled quotient-correctness evaluation, suggesting that the learned scene--nuisance tensor representation remains competitive in terms of downstream utility, stability, and robustness on a real-radar dataset where quotient-specific supervision is unavailable.

Taken together, these findings should nevertheless be interpreted within the scope of the present study. The experiments provide evidence that quotient-oriented supervision improves representation correctness under controlled sensing conditions, but do not establish universal sensing generalization, identifiable latent disentanglement, or superiority across arbitrary sensing tasks and environments.

\subsection{What the Representation Encodes and What It Filters Out}
\label{subsec:what_the_representation_encodes_and_what_it_filter}

The experimental results help clarify the representational objective induced by quotient-consistent supervision. The scene latent is encouraged to preserve scene-relevant structure that remains reliably distinguishable after nuisance canonicalization. In the present benchmark, this includes observable reflectivity and motion characteristics that remain resolvable under the sensing process. Observable decoding experiments indicate that this latent organization remains compatible with recovering sensor-resolvable scene information.

At the same time, the representation is encouraged to contract variation that is not scene-relevant under the sensing process. Sub-resolution perturbations and nuisance-induced measurement variability removed through canonicalization are treated as quotient-equivalent and therefore need not remain separated in latent space.

The mechanism ablations suggest that quotient-aligned supervision and quotient geometry play a larger role than explicit nuisance modeling under the evaluated setting. Furthermore, the comparison between MetricLearning-scene and ContrastiveScene indicates that pairwise similarity learning alone does not determine which distinctions should be preserved. Even when pair construction is derived from scene-relevant distinguishability, nuisance-visible variation may remain strongly encoded unless additional quotient-oriented structure is imposed.

\subsection{Theoretical Interpretation and Mechanistic Alignment}
\label{subsec:theoretical_interpretation_and_mechanistic_alignme}

The observed diagnostic behavior is broadly consistent with the theoretical perspective developed in Section~\ref{sec:theoretical_analysis}. Across the evaluated methods, lower false distinction, false merge, and nuisance sensitivity rates are associated with latent organizations that more closely preserve sensing-supported scene distinctions while contracting quotient-equivalent variation. The quotient relation-assignment perturbation and component-ablation analyses further suggest that reliable quotient relations and quotient-oriented supervision contribute materially to representation correctness.

Although these empirical observations do not constitute proofs of the theoretical guarantees, they are qualitatively aligned with the central intuition underlying quotient-consistent learning: representations should preserve sensing-supported scene distinctions while contracting observationally equivalent variation.

The framework also admits an information-geometric interpretation under Gaussian observation assumptions, where scene-relevant distinguishability can be viewed as approximating local statistical separation after nuisance canonicalization. However, the quotient formulation itself does not depend on this interpretation, which should be regarded as a theoretical perspective rather than an empirically validated claim of the present study.

Several theoretical and practical questions remain open, including estimation of scene-relevant distinguishability without oracle access, robustness under richer nuisance processes, and extension to real sensing platforms and adaptive sensing strategies.

\subsection{Practical Implications and Limitations}
\label{subsec:practical_implications_and_limitations}

Several limitations should be considered when interpreting the present results.

First, the primary correctness evaluations were conducted in a controlled simulation environment with known forward models and explicitly modeled nuisance factors. Although complementary real-radar evaluations of downstream utility, seed stability, and robustness were performed on the CARRADA dataset, direct evaluation of quotient consistency remains confined to the controlled benchmark setting. Therefore, the CARRADA results should be interpreted as downstream behavioral evidence, whereas the reported correctness results should be regarded as evidence of the proposed mechanism under controlled sensing conditions rather than as a complete characterization of real-world sensing environments.

Second, the current framework assumes access to oracle-derived scene-relevant distinguishability during training. Existing real-world radar datasets do not provide quotient relations, nuisance annotations, or oracle scene-distinguishability targets. Although the calibration analyses suggest that quotient-consistent behavior remains qualitatively stable under moderate supervision mismatch, practical deployment will require reliable procedures for estimating quotient relations from imperfect observations.

Third, the theoretical analysis relies on structured nuisance canonicalization and distinguishability measures derived under Gaussian observation assumptions. The controlled benchmark provides evidence for the proposed quotient-consistency mechanisms under a tractable sensing model, whereas the complementary real-radar evaluation assesses downstream utility only. Whether quotient-consistent guarantees extend to sensing environments involving non-Gaussian interference, heavy-tailed noise, or unmodeled nuisance processes therefore remains an open question.

Fourth, the mechanism ablations indicate that explicit nuisance-pathway modeling appears secondary relative to quotient-aligned supervision and quotient geometry under the evaluated setting. Whether this observation persists under richer sensing regimes warrants further investigation.

Finally, the current framework relies on pairwise supervision derived from scene distinguishability relationships. While effective in the benchmark setting considered here, efficient construction and approximation of such supervision signals may become increasingly important as sensing environments grow in scale and complexity.

Overall, the present results support the view that representation correctness in structured sensing is fundamentally tied to sensing-supported scene distinguishability after nuisance canonicalization. Under this perspective, the central challenge is not merely learning compact latent representations, but determining which scene distinctions remain meaningful under the sensing process and which variations should be contracted. The proposed quotient-consistent formulation provides one principled framework for addressing this question under sensor-conditioned observation processes.

\section{Conclusion}
\label{sec:conclusion}

This paper studied sensor-conditioned representation learning under sensing-dependent distinguishability constraints. We argued that representation correctness cannot be determined solely by reconstruction or downstream prediction performance. Instead, a sensor-conditioned representation should preserve scene-relevant distinctions supported by the sensing process after nuisance canonicalization while contracting sensor-indistinguishable and scene-irrelevant variation.

To formalize this objective, we introduced the \emph{scene-relevant observation quotient} as a representation target defined through sensing-supported distinguishability after nuisance canonicalization. We further developed a quotient-oriented framework connecting scene equivalence, distinguishability, nuisance canonicalization, and representation-correctness diagnostics. As a concrete instantiation, we proposed OQ-TSAE, a Tucker-structured scene--nuisance factorized autoencoding framework for quotient-consistent representation learning.

Experiments on a controlled sensor-conditioned benchmark provide evidence that quotient-consistent supervision improves representation-correctness diagnostics, while sensitivity, perturbation, and ablation analyses highlight the importance of quotient-aligned supervision, reliable quotient relations, and quotient geometry under the evaluated setting. A complementary real-radar evaluation on the CARRADA dataset further suggests that a reconstruction-only OQ-TSAE variant retains competitive downstream utility, low seed-to-seed variability, and favorable robustness under observation degradation, although direct evaluation of quotient correctness in real data remains unavailable.

The present study remains intentionally controlled in scope and should be viewed primarily as a mechanism-level validation of the proposed representation principle, complemented by an initial real-radar assessment of utility, robustness, and stability. These findings suggest that representation learning for structured sensing may benefit from being evaluated not only by reconstruction fidelity or downstream task accuracy, but also by whether the learned latent geometry preserves the scene distinctions that the sensing process can meaningfully support.

\appendix

\setcounter{figure}{0}
\setcounter{table}{0}
\setcounter{equation}{0}

\renewcommand{\thefigure}{\thesection.\arabic{figure}}
\renewcommand{\thetable}{\thesection.\arabic{table}}
\renewcommand{\theequation}{\thesection.\arabic{equation}}

\renewcommand{\theHfigure}{\thesection.\arabic{figure}}
\renewcommand{\theHtable}{\thesection.\arabic{table}}
\renewcommand{\theHequation}{\thesection.\arabic{equation}}
\section{Scene-Pair Taxonomy}
\label{app:scene_pair_taxonomy}

Table~\ref{tab:pair_families} summarizes the eight scene-pair families used throughout the benchmark. Geometry and reflectivity families evaluate whether quotient-equivalent perturbations are contracted while sensing-supported scene distinctions remain separable. Nuisance families evaluate whether measurement-visible but scene-irrelevant variation is suppressed after canonicalization. Mixed scene--nuisance families provide stress tests that simultaneously require scene preservation and nuisance robustness.

\begin{table}
\centering
\caption{Scene-pair taxonomy used for representation-correctness diagnostics.}
\label{tab:pair_families}
\small
\setlength{\tabcolsep}{3pt}
\begin{tabular}{p{1.1cm}p{2.2cm}p{1.2cm}p{0.9cm}p{1.1cm}}
\toprule
\textbf{Fam.} &
\textbf{Scene Change} &
\textbf{Nuisance} &
\textbf{$d_{\Sigma,\mathrm{scene}}$} &
\textbf{Target} \\
\midrule
G-Eq
& Sub-res.\ geometry
& Same
& Low
& Merge \\

G-Dist
& Resolvable geometry
& Same
& High
& Separate \\

R-Eq
& Sub-res.\ reflectivity
& Same
& Low
& Merge \\

R-Dist
& Resolvable reflectivity
& Same
& High
& Separate \\

N-Gain
& None
& Gain
& Low
& Merge \\

N-Phase
& None
& Phase
& Low
& Merge \\
\midrule

G+Gain
& Resolvable geometry
& Gain
& High
& Separate \\

R+Phase
& Resolvable reflectivity
& Phase
& High
& Separate \\
\bottomrule
\end{tabular}
\end{table}

\section{Hyperparameters, Loss Weights, and Core Notation}
\label{app:hyperparameters}

\setcounter{figure}{0}
\setcounter{table}{0}
\setcounter{equation}{0}

For reproducibility,
Tables~\ref{tab:train_hparams},
\ref{tab:loss_weights},
\ref{tab:carrada_protocol},
\ref{tab:field_loss_weights},
and~\ref{tab:notation}
summarize the primary training configuration,
objective-function weights,
real-radar evaluation settings,
internal auxiliary-loss coefficients,
and core notation used throughout the paper.

For robustness evaluation, additive observation noise is generated as
zero-mean Gaussian noise at varying SNR levels, while structured clutter
is generated as a spatially correlated Gaussian random field scaled by the
clutter-strength parameter. These perturbations are applied only during
evaluation and do not affect representation learning. The corresponding
SNR and clutter-strength ranges are summarized in
Table~\ref{tab:carrada_protocol}.

Unless otherwise specified, controlled-benchmark experiments use the configuration listed in Table~\ref{tab:train_hparams}.

\begin{table}
\centering
\caption{Training hyperparameters used in the primary experiments.}
\label{tab:train_hparams}
\footnotesize
\setlength{\tabcolsep}{4pt}
\begin{tabular}{ll}
\toprule
\textbf{Parameter} & \textbf{Value} \\
\midrule

Observation shape $(T,M,R,Q)$
& $(8,16,4,32)$ \\

Tucker ranks $(R_T,R_M,R_R,R_Q)$
& $(4,8,4,8)$ \\

Scene latent dimension
& $64$ \\

Nuisance latent dimension
& $64$ \\

Hidden dimension
& $256$ \\

Learning rate
& $10^{-4}$ \\

Batch size
& $64$ \\

Training epochs
& $100$ \\

Validation fraction
& $0.10$ \\

Mask keep probability
& $0.80$ \\

Checkpoint selection
& Lowest validation loss \\

Random seeds
& $101,202,303,404,505$ \\

Scene equivalence threshold $\epsilon_{\mathrm{eq}}$
& $0.148060$ \\

Scene distinguishability threshold $\epsilon_{\mathrm{dist}}$
& $0.234222$ \\

Diagnostic threshold $\delta$
& $0.30$ \\

Distance margin $m_{\mathrm{dist}}$
& $1.0$ \\

Rank margin $m_{\mathrm{rank}}$
& $0.05$ \\

Scene-distance gap $\epsilon_{\mathrm{gap}}$
& $10^{-4}$ \\

Residual field coefficient $\alpha_{\rho}$
& $0.1$ \\

Occupancy threshold $\tau_{\rho}$
& $0.08$ \\

\bottomrule
\end{tabular}
\end{table}

Complex-valued observations are represented using separate real and imaginary channels in the implementation, while the theoretical formulation treats observations as complex-valued tensors.

\begin{table}
\centering
\caption{Loss weights used in the unified objective of \eqref{eq:full_loss}.}
\label{tab:loss_weights}
\footnotesize
\setlength{\tabcolsep}{4pt}
\begin{tabular}{lc}
\toprule
\textbf{Weight} & \textbf{Value} \\
\midrule
$\lambda_{\mathrm{raw}}$ & $1.0$ \\
$\lambda_{\mathrm{can}}$ & $0.5$ \\
$\lambda_{\mathrm{mask}}$ & $0.3$ \\
$\lambda_{\mathrm{eq}}$ & $1.0$ \\
$\lambda_{\mathrm{dist}}$ & $0.5$ \\
$\lambda_{\mathrm{rank}}$ & $0.3$ \\
$\lambda_{\mathrm{inv}}$ & $1.0$ \\
$\lambda_{\rho}$ & $0.2$ \\
$\lambda_{\mathrm{occ\text{-}logit}}$ & $0.1$ \\
$\lambda_{v}$ & $0.2$ \\
$\lambda_{\mathrm{prim}}$ & $0.1$ \\
$\lambda_{\mathrm{prim\text{-}render}}$ & $0.0$ \\
\bottomrule
\end{tabular}
\end{table}

\begin{table}
\centering
\caption{
Real-radar evaluation protocol on the CARRADA dataset.
}
\label{tab:carrada_protocol}
\footnotesize
\setlength{\tabcolsep}{4pt}
\begin{tabular}{ll}
\toprule
\textbf{Item} & \textbf{Setting} \\
\midrule

Dataset
& CARRADA \\

Input modality
& Range--angle radar observations \\

Encoder protocol
& Frozen representation encoder \\

Probe
& Class-balanced logistic classifier \\

Training seeds
& $101,202,303,404,505$ \\

Train/Validation/Test
& $4319 / 1483 / 1391$ \\

SNR levels (dB)
& $\{-10,0,10,20,30,40\}$ \\

Structured clutter strengths
& $\{0.0,0.1,0.2,0.3,0.4,0.5\}$ \\

Primary metric
& Macro AP \\

\bottomrule
\end{tabular}
\end{table}

Macro AP is computed as the arithmetic mean of pedestrian, cyclist, and car average precision.

\begin{table}
\centering
\caption{Internal coefficients used within the reflectivity-field loss
$\mathcal L_{\rho}$.}
\label{tab:field_loss_weights}
\footnotesize
\setlength{\tabcolsep}{4pt}
\begin{tabular}{lc}
\toprule
\textbf{Coefficient} & \textbf{Value} \\
\midrule
$\lambda_{\mathrm{occ}}^{(\rho)}$
& $1.0$ \\
$\lambda_{\mathrm{bg}}$
& $0.5$ \\
$\lambda_{\mathrm{phase}}$
& $0.1$ \\
$\lambda_{\mathrm{sp}}$
& $0.25$ \\
\bottomrule
\end{tabular}
\end{table}

\begin{table}
\centering
\caption{Core notation used throughout the paper.}
\label{tab:notation}
\footnotesize
\setlength{\tabcolsep}{3pt}
\renewcommand{\arraystretch}{1.05}

\begin{tabular}{p{0.28\columnwidth} p{0.62\columnwidth}}
\toprule
\textbf{Symbol} & \textbf{Meaning} \\
\midrule

$s_t,\eta_t$
& Scene state and nuisance state. \\

$a^{\mathrm{act}}=(m,r)$
& Sensing action (aperture and response mode). \\

$\mathbf Y,\widetilde{\mathbf Y}$
& Observation tensor and canonicalized observation tensor. \\

$\mathcal Q_{\mathrm{scene}}^{\mathcal U,\mathcal C}$
& Scene-relevant observation quotient. \\

$d_{\Sigma,\mathrm{scene}},d_{\Sigma,\mathrm{raw}}$
& Scene-relevant and raw observation distances. \\

$\mathcal P_{\mathrm{eq}}^{\mathrm{scene}},
\mathcal P_{\mathrm{dist}}^{\mathrm{scene}}$
& Reliable equivalent and distinguishable scene-pair sets. \\

$\mathcal P_{\mathrm{amb}}^{\mathrm{scene}}$
& Ambiguous scene-pair set. \\

$\delta$
& Diagnostic distance threshold used in FDR/FMR/NSR evaluation. \\

$\epsilon_{\mathrm{eq}},
\epsilon_{\mathrm{dist}}$
& Supervision thresholds. \\

$\mathbf G^{\mathrm{scene}},
\mathbf G^{\mathrm{nui}}$
& Scene and nuisance Tucker cores. \\

$\bar z^{\mathrm{scene}}$
& Normalized scene latent representation. \\

$\sim_{\mathcal U,\mathcal C}^{\mathrm{scene}}$
& Scene equivalence relation induced by the observation quotient. \\

$z^{\mathrm{scene}},
z^{\mathrm{nui}}$
& Scene and nuisance latent representations. \\

$\mathcal L_{\mathrm{eq}},
\mathcal L_{\mathrm{dist}},
\mathcal L_{\mathrm{rank}}$
& Quotient-consistency losses. \\

$\operatorname{Rank}$
& Rank-consistency diagnostic. \\

\bottomrule
\end{tabular}
\end{table}

\section*{CRediT Authorship Contribution Statement}

Yan Jiao: Conceptualization, Methodology, Formal analysis, Investigation, Software, Validation, Visualization, Writing -- original draft, Writing -- review and editing.

Pin-Han Ho: Supervision, Project administration, Methodology, Writing -- review and editing.

Limei Peng: Supervision, Writing -- review and editing.

\section*{Data Availability}

The controlled-benchmark data used in this study were generated procedurally from the sensor-conditioned simulation benchmark described in the manuscript. The complementary real-radar evaluation uses the publicly available CARRADA dataset~\cite{ouaknine2021carrada}. The source code, configuration files, and benchmark-generation scripts required to reproduce the controlled-benchmark experiments will be made publicly available upon acceptance. During review, they are available from the corresponding author upon reasonable request.

\section*{Declaration of Competing Interest}

The authors declare that they have no known competing financial interests or personal relationships that could have appeared to influence the work reported in this paper.

\section*{Declaration of Generative AI and AI-assisted Technologies in the Manuscript Preparation Process}

During the preparation of this manuscript, the authors used generative AI and AI-assisted tools for language editing and readability improvement. After using these tools, the authors reviewed and edited the content as needed and take full responsibility for the content of the manuscript.

\bibliographystyle{cas-model2-names}
\bibliography{references}
\end{document}